\newcommand{\ouralg}{DP-GRAPE}
\newcommand{\xmark}{$\times$}
  \providecommand{\bb}{\mathbf{b}}
  \let\lll\ll
  \renewcommand{\ll}{\mathbf{l}}
  \providecommand{\mI}{\mathbf{I}}
  \providecommand{\cN}{\mathcal{N}}
\newcommand{\R}{\mathbb{R}}
\newcommand{\abs}[1]{\left\lvert #1\right\rvert}
\newcommand{\norm}[1]{\left\lVert #1\right\rVert}
\DeclareMathOperator{\E}{\mathbb{E}}
\newcommand{\clip}[1]{\mathrm{clip}\left( #1\right)}
\def\remark{\addtocounter{remark}{1}\def\@currentlabel{\theremark}%
\emph{Remark~\theremark}. } \makeatother
\newcounter{remark}
\renewcommand{\c}[1]{\ensuremath{\mathcal{#1}}} %
\newcommand*{\encircled}[1]{\relax\ifmmode\mathpalette\@encircled@math{#1}\else\@encircled{#1}\fi}
\newcommand*{\@encircled@math}[2]{\@encircled{$\m@th#1#2$}}
\newcommand*{\@encircled}[1]{%
  \tikz[baseline,anchor=base]{\node[draw,circle,outer sep=0pt,inner sep=.2ex] {#1};}}
\newcommand{\eps}{\varepsilon}
\newcommand{\paren}[1]{\left( #1\right)}
\newcommand{\smo}{\lambda}
\newcommand{\lip}{\Gamma}
\newcommand{\indset}{\c U}
\newcommand{\ind}{\ensuremath{\mathbbm{1}}} %
\renewcommand{\clip}{\mathrm{clip}}
\renewcommand{\bb}[1]{\mathbb{#1}}
\tiny\color{gray}, 
\def\expandafter\normalsize\expandafter{%
    \normalsize%
    \setlength\abovedisplayskip{4pt}%
    \setlength\belowdisplayskip{4pt}%
}
\theoremstyle{plain}
\newtheorem{theorem}{Theorem}[section]
\newtheorem{lemma}[theorem]{Lemma}
\theoremstyle{definition}
\newtheorem{assumption}[theorem]{Assumption}
\theoremstyle{remark}
\title{Memory-Efficient Differentially Private Training with \\Gradient Random Projection}
\author{\name Alex Mulrooney \email alexmul@udel.edu \\
      \addr University of Delaware\\
      \AND
      \name Devansh Gupta \email guptadev@usc.edu \\
      \addr University of Southern California\\
      \AND
      \name James Flemings \email jamesf17@usc.edu \\
      \addr University of Southern California\\
      \AND
      \name Huanyu Zhang \email zhanghuanyu99@gmail.com \\
      \addr Meta\\
      \AND
      \name Murali Annavaram \email annavara@usc.edu \\
      \addr University of Southern California\\
      \AND
      \name Meisam Razaviyayn \email razaviya@usc.edu \\
      \addr University of Southern California\\
      \AND
      \name Xinwei Zhang \email xinweizh@amazon.com \\
      \addr Amazon\\
}
\begin{document}

\maketitle

\begin{abstract}
Differential privacy (DP) protects sensitive data during neural network training, but standard methods like DP-Adam suffer from high memory overhead due to per-sample gradient clipping, limiting scalability. We introduce \ouralg{} (Gradient RAndom ProjEction), a DP training method that significantly reduces memory usage while maintaining utility on par with first-order DP approaches. \ouralg{} is motivated by our finding that privatization flattens the gradient singular value spectrum, making SVD-based projections (as in GaLore \citep{zhao2024galore}) unnecessary. Consequently, \ouralg{} employs three key components: (1) random Gaussian matrices replace SVD-based subspaces, (2) gradients are privatized after projection, and (3) projection is applied during backpropagation. These contributions eliminate the need for costly SVD computations, enable substantial memory savings, and lead to improved utility. Despite operating in lower-dimensional subspaces, our theoretical analysis shows that \ouralg{} achieves a privacy-utility trade-off comparable to DP-SGD. Our extensive empirical experiments show that \ouralg{} can significantly reduce the memory footprint of DP training without sacrificing accuracy or training time. In particular,  \ouralg{} reduces memory usage by over 63\% when pre-training Vision Transformers and over 70\% when fine-tuning RoBERTa-Large as compared to DP-Adam, while achieving similar performance. We further demonstrate that \ouralg{} scales to fine-tuning large models such as OPT with up to 6.7 billion parameters, a scale at which DP-Adam fails due to memory constraints. Our code is available at \url{https://github.com/alexmul1114/DP_GRAPE}.
\end{abstract}

\section{Introduction} \label{sec:introduction}
While deep neural networks have shown exceptional performance in Computer Vision and Natural Language Processing, they can be vulnerable to attacks that reveal training instances \citep{fredrikson2015model, carlini2021extracting, mireshghallah2022memorization}. This poses risks when training on sensitive data, such as personal or medical records. Differential privacy (DP) offers a principled way to protect the privacy of individual samples \citep{dwork2006calibrating, dwork2014algorithmic}. Methods such as DP-SGD and DP-Adam enforce DP during neural network training by clipping per-sample gradients and adding Gaussian noise calibrated to the desired privacy level \citep{abadi2016deep}.

{
\renewcommand{\arraystretch}{0.90}
\begin{table*}[b]
\caption{\small Comparison of benefits of different memory-saving methods for DP training. We consider a method ``high-utility'' if it can achieve close to the utility of DP-Adam, see \cref{sec:exp-roberta-finetuning}.}
\begin{center}
\begin{small}
\setlength{\tabcolsep}{2.5pt}
\begin{tabular}{l|ccccc}
\toprule
Method & High-Utility & Pre-training & Extra Computation & \makecell{Low-Memory\\Optimizer States} & \makecell{Total Number \\ of Steps to Converge}  \\
\midrule
\citet{bu2021fast} & \xmark & \checkmark & JL Projections & No & Medium \\
Ghost Clipping& \checkmark & \checkmark & Extra Backward Pass & No & Medium \\
Book-Keeping & \checkmark & \checkmark & Extra Partial Gradients & No & Medium \\
Zeroth-order DP & \xmark & \xmark & - & Yes & Large \\
DP-LoRA & \xmark & \xmark & - & Yes & Medium \\
\ouralg{} & \checkmark & \checkmark & Low-overhead Random Projection & Yes & Medium \\
\bottomrule
\end{tabular}
\end{small}
\end{center}
\label{tab:alg-comp}
\end{table*}
}

In addition to privacy, training large models efficiently requires careful memory management. Standard optimizers such as Adam demand substantial memory to store parameters, activations, gradients, and optimizer states—posing a challenge for models with billions of parameters, especially on limited-memory GPUs. To address this, recent work has explored low-rank training methods in non-private settings. LoRA reduces optimizer memory by constraining weight updates to low-rank matrices \citep{hu2021lora}, while GaLore projects layer gradients onto SVD-derived subspaces \citep{zhao2024galore}. To avoid costly SVD computations at scale, alternatives such as randomized SVD \citep{pasand2024rgp}, Gaussian random projections \citep{hao2024flora}, and Stiefel manifold methods \citep{he2024subspace} have been proposed. 

When both privacy and memory constraints are present, training becomes even more challenging. DP-SGD and DP-Adam introduce significant overhead by requiring per-sample gradient clipping, with memory usage scaling linearly with model and batch size—often becoming the dominant cost. Ghost clipping reduces this cost by using an extra backward pass to compute sample gradient norms without instantiating full gradients \citep{lee2021scaling, li2021large}, but it adds compute overhead and does not reduce optimizer state memory. Book-Keeping \citep{bu2023differentially} removes the extra pass using additional tricks, but increases memory usage by storing partial gradients and still offers no savings for optimizer states. As we will see, both of these  methods are orthogonal to our approach and can be combined for further memory reduction with our method.

Other strategies have also been explored: Approximating gradient norms via random projections saves memory but yields poor privacy guarantees at low projection dimensions \citep{bu2021fast}. DP-LoRA combines LoRA with differential privacy to reduce trainable parameters, but it is not applicable for pre-training and falls short of DP-Adam in fine-tuning \citep{yu2021differentially} (also see \cref{tab:roberta-fewshot-results}). Zeroth-order methods such as DP-ZO \citep{tang2024private} and DPZero \citep{zhang2024dpzero} are memory-efficient but generally underperform first-order DP methods in NLP fine-tuning and cannot be used for pre-training. In addition, zeroth-order methods require a large number of steps to converge even for non-private training \citep{malladi2023fine,li2024addax}. We summarize the trade-offs of these methods in \cref{tab:alg-comp}, including utility, pre-training compatibility, added computation, and optimizer memory usage.

Another related line of work projects gradients onto subspaces obtained from auxiliary non-sensitive data \citep{yu2021not, zhou2020bypassing, gu2023choosing}. While this approach can improve model utility, it requires having non-sensitive data available, and previous works have not exploited the potential memory benefits of gradient projection to enable scaling to larger models.

To enable memory-efficient DP training for large models without sacrificing utility, we propose \ouralg{}. \ouralg{} projects sample gradients onto lower-dimensional subspaces using random Gaussian matrices during backpropagation, reducing memory usage for both sample gradients and optimizer states. Our main contributions are:

\begin{itemize}[leftmargin=*, nosep]
    \item We demonstrate that privatization flattens the singular value spectrum of gradients, motivating the use of random projections (as in \citep{hao2024flora}) rather than SVD-based projections (as in \citep{zhao2024galore}).
    \item We establish that projecting gradients before privatization is critical for achieving high utility and memory efficiency. \ouralg{} uses the principle of privatizing the {\it projected gradients} in lower dimensional space to achieve significant memory efficiency gains and improved utility over a na\"ive approach that privatizes gradients before projection. We primarily obtain significant memory savings in storing the optimizer states, since we only need to maintain the projected states. This leads to substantially lower memory usage compared to differentially private implementations of Adam, which must store full first and second moment estimates for all parameters.
    \item We provide a set of novel analyses of the privacy-utility guarantee for the proposed \ouralg{} with projection matrices having random but unbounded entries. The theoretical result indicates that \ouralg{} enjoys the same utility guarantee as DP-SGD.
    \item Our experiments demonstrate the efficiency and high utility of the proposed algorithm. In particular, compared to DP-Adam, our algorithm achieves similar performance with significantly less memory usage (e.g., $24.4$GB compared with $78.1$GB when training RoBERTa-Large). Compared to recent memory-efficient DP training methods such as DP-ZO and DP-LoRA, our algorithm can be used for pre-training, while the others are restricted to fine-tuning tasks only. Furthermore, \ouralg{} requires significantly fewer iterations to converge than zeroth-order DP methods, reducing total training time by a factor of more than 6.
\end{itemize}

Concurrently with our work, D2P2-SGD \citep{jiangd2p2} introduced random gradient projection in the DP setting, with a focus on tighter theoretical error bounds rather than memory efficiency. A key distinction is that \ouralg{} projects gradients before privatization, enabling substantial memory savings and improved practical utility, as demonstrated in \cref{sec:exp-vit-pretraining}.

Our work is most closely related to other subspace methods such as LoRA \citep{hu2021lora} and GaLore \citep{zhao2024galore}. LoRA changes the model architecture by adding trainable low-rank matrices while keeping the original weights frozen. Its DP version, DP-LoRA \citep{yu2021differentially}, applies DP-SGD to these low-rank parameters, keeping the underlying optimization algorithm the same and obtaining privacy guarantees directly from DP-SGD. While this can work well for fine-tuning, the low-rank assumption is too restrictive for pre-training, as opposed to \ouralg{}. Furthermore, we find that \ouralg{} achieves better performance than DP-LoRa across different fine-tuning tasks with RoBERTa-Large. As shown in \cref{tab:roberta-fewshot-results}, \ouralg{} outperforms DP-LoRA on $11$ of the $12$ total combinations of privacy levels and tasks, often by a significant margin. We also compare \ouralg{} with a na\"ive DP-GaLore algorithm
in \cref{sec:methodology}, where we observe that simply adding DP noise and clipping to GaLore eliminates much of its potential benefit: the clipping and noise flatten the gradient spectrum, eliminating low-rank structure. Indeed, the na\"ive DP-GaLore algorithm performs much worse than DP-Adam and \ouralg{} in our Vision Transformer pre-training experiments, as shown in \cref{fig:vit_results}. Instead, we show how using random projections and privatizing the projected gradients confers significant computational and memory improvements.

\section{Background}

\paragraph{Notations \& Definitions} Throughout the paper, we consider minimizing a loss function $f$ over a dataset $X$ of size $n$ with samples $\{\xi_1, \dots, \xi_n \}$. We assume that $f$ is parameterized as a model with $L$ layers, with the $\ell^\mathrm{th}$ layer holding the matrix $W_{\ell} \in \mathbb{R}^{m_{\ell} \times n_{\ell}}$ of size $m_{\ell} \times n_{\ell}$ as its trainable parameters. The total number of parameters is $d = \sum_{\ell=1}^L m_{\ell}n_{\ell}$. The problem we seek to solve is:
\begin{equation}
    \min_{ \{ W_{\ell} \}_{\ell=1}^L } \frac{1}{n}\sum_{i=1}^n f(\{ W_{\ell} \}_{\ell=1}^L ; \xi_i)
\end{equation}
Without loss of generality, we assume $m_{\ell} \leq n_{\ell}$ for all layers. When training the model with iterative methods (e.g., DP-SGD), we denote the total number of training steps as $T$ and index the steps with $(\cdot)^t$, the batch size as $B$ , and index the samples with $(\cdot)_i$.

Let $G_{\ell,i}^t = \nabla_{W_\ell^t} f(\{ W_{\ell}^t \}_{\ell=1}^L ; \xi_i)$ be the gradient for sample $i$ at iteration $t$ for layer $\ell$, and $\{ G_{\ell,i}^t \}_{i=1}^B$ be the collection of all sample gradients in the batch. We denote the concatenated gradients of all layers at iteration $t$ for sample $i$ as $G_i^t = \begin{bmatrix}
    \text{vec}(G_{1,i}^t)^{\top} & \cdots & \text{vec}(G_{L,i}^t)^{\top}
\end{bmatrix}^{\top}$, where $\text{vec}(G_{\ell,i}^t)$ is the vectorized sample gradient for sample $i$ at layer $\ell$ (a column vector of length $m_{\ell}n_{\ell}$). The clipping operation used to bound the norm of per-sample gradients is defined as 
$\text{clip}(G_i^t, C) = \min(1, \frac{C}{\lVert G_i^t \rVert_2}) G_i^t$ 
for sample gradient $G_i^t$ and clipping threshold $C > 0$. 

For methods that project gradients, we denote the projection matrix for layer $\ell$ as $P_{\ell} \in \R^{m_{\ell} \times r}$, where $r$ is the projection dimension. 
Let $\mathcal{N}_{s_{\ell}} (0, \frac{1}{r}) \in \mathbb{R}^{m_{\ell} \times r}$ be a matrix with entries i.i.d. from a Gaussian distribution with mean $0$ and variance $\frac{1}{r}$, generated using seed $s_{\ell}$. 
When drawing from a normal distribution without a particular seed (such as when adding noise to gradients), we omit the seed. We denote the projected gradient as $R = P^T G$, and the privatized one as $\tilde{R}$.

\paragraph{Differential Privacy}
Differential privacy ensures that an algorithm’s output does not change significantly when a single training sample is removed, protecting individual samples. Formally, for neighboring datasets $X$ and $X'$ differing by one sample, a randomized algorithm $\mathcal{A}: D \rightarrow O$, where $D$ is the set of all possible datasets and $O$ is the set of all possible outcomes, is $(\eps, \delta)$-DP if \citep{dwork2006our}:
\begin{equation}
    P(\mathcal{A}(X) \in O) \leq e^{\eps}P(\mathcal{A}(X') \in O) + \delta.
\end{equation} 
A common way to make a function $h: X \rightarrow \mathbb{R}^d$ differentially private is to add Gaussian noise scaled to the function's $\ell_2$ sensitivity, which is defined as
$
    \Delta_2 h := \sup_{X, X'} \lVert h(X) - h(X') \rVert_2:
$

\begin{theorem}[\citet{dwork2014algorithmic}]
    Given a function $h: X \mapsto \mathbb{R}^d$ with $\ell_2$ sensitivity $\Delta_2h$, a dataset $X$, and $\eps, \delta > 0$, the randomized algorithm $\mathcal{A}(X) = h(X) + z$, where $z \sim \mathcal{N}\paren{0, \frac{\Delta_2 h}{\eps} \sqrt{2 \log \paren{\frac{1.25}{\delta}}}\mI_d}$, is $(\eps, \delta)$-DP. 
    \label{th:dp}
\end{theorem}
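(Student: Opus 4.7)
The plan is to follow the standard analysis of the Gaussian mechanism. Writing $Y = h(X) + z$ and $Y' = h(X') + z$ with $z \sim \mathcal{N}(0, \sigma^2 \mI_d)$, I would first invoke rotational invariance of the isotropic Gaussian to reduce to a one-dimensional comparison: setting $\mu = h(X') - h(X)$, only the component of $z$ along $\mu$ matters for the likelihood ratio, so we may assume WLOG $\mu = \Delta e_1$ with $\Delta \leq \Delta_2 h$.

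The second step is to compute the privacy loss random variable $L(y) = \log(p_Y(y)/p_{Y'}(y))$. Expanding the squared norms in the Gaussian densities and cancelling the common $\norm{z}^2$ term gives
\begin{equation*}
L(y) = \frac{\norm{\mu}^2 - 2\inner{z}{\mu}}{2\sigma^2}.
\end{equation*}
Under $y \sim Y$, the inner product $\inner{z}{\mu}$ is a centered Gaussian with variance $\sigma^2 \norm{\mu}^2$, so $L(Y) \sim \mathcal{N}\paren{\norm{\mu}^2/(2\sigma^2), \norm{\mu}^2/\sigma^2}$.

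Third, I would reduce $(\eps,\delta)$-DP to a tail bound on $L(Y)$: for any measurable $S$, splitting $\{A(X) \in S\}$ according to whether $L \leq \eps$ or $L > \eps$ and using that $p_Y \leq e^\eps p_{Y'}$ on the first event yields $P(A(X) \in S) \leq e^\eps P(A(X') \in S) + P(L(Y) > \eps)$, so it suffices to show $P(L(Y) > \eps) \leq \delta$. Taking the worst case $\norm{\mu} = \Delta_2 h$, this probability equals the standard Gaussian tail $Q\paren{\sigma\eps/\Delta_2 h - \Delta_2 h/(2\sigma)}$. Plugging in $\sigma = \Delta_2 h\sqrt{2\log(1.25/\delta)}/\eps$ makes the leading argument $\sqrt{2\log(1.25/\delta)}$, and applying the Mills-ratio bound $Q(x) \leq (x\sqrt{2\pi})^{-1}\exp(-x^2/2)$ together with some algebra bounds the tail by $\delta$.

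The main obstacle is the careful calibration that justifies the specific constant $1.25$: one has to track the correction term $-\Delta_2 h/(2\sigma)$ inside the argument of $Q(\cdot)$ and show that the Mills-ratio prefactor $(x\sqrt{2\pi})^{-1}$ together with the $\exp(\eps/2)$ factor arising from cross-terms still leaves the bound below $\delta$. This typically requires restricting to $\eps \leq 1$ (as in the original Dwork--Roth argument), which is the regime of practical interest. Modulo this restriction, the remainder is routine Gaussian calculus.
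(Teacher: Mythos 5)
The paper offers no proof of this statement at all---it is imported verbatim as a background result with a citation to Dwork and Roth (2014)---and your outline is precisely the standard argument from that source (privacy-loss random variable, reduction to one dimension by rotational invariance, tail bound via the Mills ratio, calibration of the $1.25$ constant), correctly structured. The one point worth keeping in mind is the restriction you yourself flag: the calibration $\sigma = \Delta_2 h\sqrt{2\log(1.25/\delta)}/\eps$ is only justified for $\eps \le 1$ (as in the original theorem), so your proof establishes the statement in that regime, whereas the paper's phrasing ``for any $\eps, \delta > 0$'' is slightly looser than what the cited result (and your argument) actually delivers.
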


\paragraph{Differentially Private Optimization}

To train a differentially private neural network, noise is added to the gradients of each sample during training, rather than to the outputs. Since the gradients' $\ell_2$ sensitivity is often unbounded, they are clipped to a constant $C$ to limit the maximum norm. This gradient clipping, followed by the addition of noise calibrated to the desired privacy level, can be applied to any gradient-based optimization method such as SGD or Adam to achieve $(\varepsilon, \delta)$-DP \citep{abadi2016deep}. See \cref{app:dpadam} for the full DP-Adam algorithm in our notation. However, gradient privatization can reduce model utility. Additionally, clipping per-sample gradients requires computing individual gradients for each sample (rather than just the gradient averaged over all samples in the batch as in non-private training), increasing gradient memory usage from $d$ to $Bd$. For large models, this forces either a smaller batch size (which may reduce utility) or more gradient accumulation steps (which significantly increases the total training time).

\paragraph{Memory-Efficient Training with Gradient Projection}
One approach to reduce memory usage of training is to project gradients onto lower-dimensional subspaces, allowing optimizer states (e.g., the moment estimates in Adam) to be stored in the subspace. This reduces the memory usage of optimizer states from $\sum_{\ell} m_{\ell}n_{\ell}$ to $r\sum_{\ell} n_{\ell}$, saving significant memory when $r \lll m_{\ell}$. Various projection methods have been proposed, including using the SVD of layer gradients and random matrices. In GaLore \citep{zhao2024galore}, gradients are projected onto subspaces spanned by the top $r$ left singular vectors from the SVD of the layer gradient at a previous iteration, i.e., the projection matrix is $P_{\ell}^t = U[:,:r]$. FLoRA \citep{hao2024flora} instead uses a Gaussian matrix for $P_{\ell}^t$, i.e., $(P_{\ell}^t)_{i,j} \sim \mathcal{N}(0,\frac{1}{r})$. Alternatively, \citet{he2024subspace} use a uniform distribution on the Stiefel manifold to generate $P_{\ell}^t$.

\section{Methodology} \label{sec:methodology}
Our approach adapts memory-efficient training methods that use gradient projections such as GaLore~\citep{zhao2024galore} and Flora~\citep{hao2024flora} to the DP setting. Integrating gradient projection with DP introduces two important design choices: the type of projection (e.g., SVD-based or random), and the order of operations (privatizing gradients before or after projection). We systematically evaluate these choices. Since SVD-based projection derives subspaces from the gradients, to maintain privacy the gradients must be privatized before computing the SVD. However, as shown in \cref{fig:vit-svals}, privatization (i.e., clipping and noise addition) flattens the singular value spectrum, destroying any low-rank structure that the SVD aims to capture. This motivates the use of random projections, which can be applied before privatization and are computationally cheaper since they avoid computing an SVD for each layer. For an empirical comparison, we consider an algorithm that privatizes gradients before SVD-based projection, which we call na\"ive DP-GaLore due to its similarity to GaLore, and an algorithm that privatizes after random projection, which is \ouralg{}.

\subsection{Na\"ive DP-GaLore} \label{sec:naive-dpgalore}
Na\"ive DP-GaLore applies privatization and subsequently computes the SVD of layer gradients before projection (see \cref{app:galore-details} for details). However, this approach inherits the high memory cost of storing per-sample gradients of DP-SGD or DP-Adam, offering little improvement over them. In addition, computing SVDs for each layer becomes computationally expensive for large models. Finally, clipping full-dimensional gradients—rather than their projected versions—leads to degraded utility, as discussed in \cref{sec:exp-vit-pretraining}.

\subsection{Our Algorithm}
\ouralg{} uses random projection instead of the SVD and applies privatization of gradients after projection. This greatly reduces per-sample gradient memory and lead to significantly better utility (see \cref{sec:exp-vit-pretraining}) as compared to na\"ive DP-GaLore.

We describe our training method in \cref{algo:main}, with the projected Adam update in \cref{algo:adam}. For simplicity of presentation, we consider a model with $\ell$ linear layers, each parameterized by a matrix of size $m_{\ell} \times n_{\ell}$; nonlinear layers are handled as in DP-Adam (i.e., without projection). Each training iteration consists of a backward pass (steps 2–11), gradient privatization (steps 12–13), and optimizer and weight updates (step 14). To reduce memory, per-sample gradients are projected layer-by-layer during the backward pass (to avoid having the per-sample gradients for each layer instantiated at the same time). The projected gradients are then privatized and used for the projected Adam update.

Our use of random projections rather than the SVD is motivated by empirical observations about the effect of privatization on the singular values. As shown in \cref{fig:vit-svals}, while the non-private gradient exhibits low-rank structure, the combination of clipping and adding noise at levels needed to achieve typically-used levels of DP (e.g., $C=1.0$ and $\sigma=0.5,2.0$) flattens the spectrum of singular values, destroying the low-rankness. Furthermore, the use of random projections confers additional computational benefits by avoiding the computation of SVDs for the gradient matrix of each layer. From a memory prospective, as opposed to SVD subspaces, random projection matrices do not have to be stored for each layer since they can be cheaply generated from a random seed on-the-fly using \texttt{torch.randn} \citep{paszke2019pytorch} prior to the projection step.

\subsection{Memory Requirements}

{
\renewcommand{\arraystretch}{1.5}
\begin{table*}[t]
\caption{\small Memory usage of gradients, optimizer states, and projectors (if used) during training with batch size $B$ for different first-order non-DP and DP methods for an $L$-layer model of sizes $\{m_{\ell} \times n_{\ell}\}$. ``Gradient'' indicates the batch gradient for non-DP methods and all of the sample gradients for the batch for DP method. For GaLore, Na\"ive DP-GaLore, and \ouralg{}, $r$ denotes the projection dimension.}
\label{tab:memory-usage-comp}
\begin{center}
\begin{small}
\begin{tabular}{l|cccc}
\toprule
Method & Gradient & Optimizer States & Projectors \\
\midrule
Adam (non-private) & $\sum_{\ell=1}^L m_{\ell}n_{\ell}$ & $2\sum_{\ell=1}^L m_{\ell}n_{\ell}$ & - \\
GaLore (non-private) & $\sum_{\ell=1}^L m_{\ell}n_{\ell}$ & $2r\sum_{\ell=1}^L n_{\ell}$ & $r\sum_{\ell=1}^L m_{\ell}$ \\
DP-Adam & $B\sum_{\ell=1}^L m_{\ell}n_{\ell}$ & $2\sum_{\ell=1}^L m_{\ell}n_{\ell}$ & - \\
Na\"ive DP-GaLore & $B\sum_{\ell=1}^L m_{\ell}n_{\ell}$ & $2r\sum_{\ell=1}^L n_{\ell}$ & $r\sum_{\ell=1}^L m_{\ell}$ \\
\ouralg{} & $Br\sum_{\ell=1}^L n_{\ell}$ & $2r\sum_{\ell=1}^L n_{\ell}$ & $r \max\{ m_{\ell} \}_{\ell=1}^L$ \\
\bottomrule
\end{tabular}
\end{small}
\end{center}
\end{table*}
}

Compared to DP-Adam, \ouralg{} reduces memory usage by storing projected sample gradients and using lower-dimensional moment estimates in Adam. \Cref{tab:memory-usage-comp} compares the memory requirement of gradients, optimizer states, and projectors for non-private Adam and GaLore, DP-Adam, na\"ive DP-GaLore, and \ouralg{}.
Notably, \ouralg{} achieves the largest savings by reducing sample gradient memory from $B\sum_{\ell=1}^L m_{\ell}n_{\ell}$ (in DP-Adam and na\"ive DP-GaLore) to $Br\sum_{\ell=1}^L n_{\ell}$, since $r \lll m_{\ell}$. Similar to GaLore, it also reduces optimizer state memory from $2\sum_{\ell=1}^L m_{\ell}n_{\ell}$ to $2r\sum_{\ell=1}^L n_{\ell}$. Additionally, using random projections instead of SVD reduces projector memory to $r \max\{ m_{\ell} \}_{\ell=1}^L$, since only the random projector for a single layer needs to be loaded at one time (during the projection step for that layer).

\section{Analyzing the Privacy-Utility Tradeoff} \label{sec:theory}
We provide the privacy and convergence guarantee of our proposed algorithm below.

\begin{theorem}[Informal]
    Let $d = \sum_{\ell=1}^Lm_\ell n_\ell$. Given a $\Gamma$-lipschtiz and $\lambda$-smooth (potentially non-convex) objective function $f(\cdot; \xi): \bb R^d \rightarrow \bb R$ for all $\xi \in X,$ for any $0 < \eps \leq 2\ln(2/\delta)$ and $\delta\in(0,1)$, \cref{algo:main} is $(\eps, \delta)$-DP if $\sigma = \frac{2\sqrt{T\log(1/\delta)}}{n\epsilon}$. Moreover, there exists a set of hyper-parameters $\eta, B, C$ such that when $T = \frac{2\sqrt{2d} n \eps}{r\sqrt{\log(1/\delta)}}$, the output of~\cref{algo:main}, $\hat{W}$, for a simple SGD update rule satisfies 
    {    \small \begin{align*}
        \E \left[\norm{\nabla F(\hat{W})}^2\right] = \Tilde{\c O}\paren{\frac{\sqrt{d\log(1/\delta)}}{n\eps}},
    \end{align*}}%
    where the expectation is taken over all previous sampled batches, random matrices, the additive noise, and sampling of the final parameter vector.
    \label{th:informalconv}
\end{theorem}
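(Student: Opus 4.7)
The proof naturally splits into a privacy guarantee and a utility guarantee. For privacy, observe that at each iteration $t$ the only data-dependent quantity released is $\tilde R^t = \frac{1}{B}(R^t + z^t)$ with $z^t \sim \mathcal N(0, C^2\sigma^2 I)$, and that the projection matrix $P^t$ is generated from a data-independent seed $s_\ell^t$, so it need not be privatized. Because the per-sample clipping ensures $\|\mathrm{clip}(R_i^t,C)\|_2 \leq C$, the query $R^t = \sum_{i=1}^B \mathrm{clip}(R_i^t,C)$ has $\ell_2$-sensitivity $C$. Applying the Gaussian mechanism (Theorem 2.1) at each step, combined with Poisson subsampling at rate $B/n$ and composition across $T$ iterations via the moments accountant (or Rényi DP), the prescribed $\sigma = 2C\sqrt{T\log(1/\delta)}/(n\eps)$ yields the $(\eps,\delta)$-DP guarantee.

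For the utility claim, the plan is a standard smooth non-convex descent analysis, leveraging the unbiasedness of the Gaussian projection. The core identity is $\E[P_\ell P_\ell^\top] = I_{m_\ell}$, which for any fixed $g$ gives $\E_P[PP^\top g] = g$. Consequently, if $C$ is chosen so that clipping is (at least in expectation) inactive, the update $P^t \tilde R^t$ is an unbiased estimator of the true gradient, and $\E[\langle \nabla F(W^t), P^t \tilde R^t\rangle] = \|\nabla F(W^t)\|^2$. Applying the $\smo$-smoothness inequality, taking expectations, and telescoping gives
\begin{equation*}
\frac{1}{T}\sum_{t=1}^T \E[\|\nabla F(W^t)\|^2] \leq \frac{F(W^0) - F^\star}{\eta T} + \frac{\smo \eta}{2}\cdot V,
\end{equation*}
where $V$ upper-bounds $\E[\|P^t \tilde R^t\|^2]$. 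Picking $\tau$ uniformly from $\{1,\dots,T\}$ converts the left-hand side into $\E[\|\nabla F(W_\tau)\|^2]$.

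Bounding $V$ has two parts. The noise contribution is exactly $d C^2\sigma^2/B^2$, since for Gaussian $z^t$ independent of $P_\ell$ and $\E[P_\ell P_\ell^\top] = I_{m_\ell}$ one obtains $\E[\|P_\ell z^t_\ell\|_F^2] = n_\ell m_\ell C^2 \sigma^2$, summing to $d C^2 \sigma^2$ across layers. The "signal" part comes from the double-projection term $PP^\top g$ and, using the Wishart structure of $P^\top P$ together with the independence and rotational invariance of the columns of $P$, evaluates to $\E[\|PP^\top g\|^2] = \|g\|^2(r+m+1)/r$, so each layer contributes at most $\mathcal{O}(m_\ell/r)\cdot C^2$ after clipping (the per-layer bound is then summed over $L$ layers, giving the $\sqrt{L}$ factor in the final rate). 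Plugging in $\sigma^2 = 4C^2 T\log(1/\delta)/(n\eps)^2$, optimizing the learning rate as $\eta \propto \sqrt{(F(W^0)-F^\star)/(\smo V T)}$, and choosing $T$ as specified so as to balance the two variance contributions produces the claimed rate $\tilde{\cO}(\sqrt{Ld\log(1/\delta)}/(n\eps))$.

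The main obstacle is bounding the second moment $\E[\|P^t\tilde R^t\|^2]$ in a way that is tight in $m/r$ and $L$. A naive operator-norm bound is loose because $\|P_\ell\|_{\mathrm{op}}$ scales like $\sqrt{m_\ell/r}$; the tighter bound requires an explicit fourth-moment calculation for Gaussian matrices, carefully handling the correlation between $P$ in the projection and in the "back-projection". A secondary subtlety is the interaction of clipping with the random projection: clipping is nonlinear and also depends on $P$ through $R_i^t = P^\top G_i^t$. Controlling the bias this introduces—either by taking $C$ sufficiently large relative to $\Gamma$ and invoking concentration of $\|P^\top g\|$ around $\|g\|$, or by absorbing the bias into a Lipschitz-based error term—is what keeps the update approximately unbiased and preserves the DP-SGD-style rate despite operating in the projected subspace.
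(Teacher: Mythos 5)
Your overall architecture matches the paper's: the privacy argument (per-sample clipping bounds the sensitivity, Gaussian mechanism plus subsampling amplification and moments-accountant composition over $T$ steps gives the stated $\sigma$) is essentially identical, and the utility argument is likewise a smooth non-convex descent analysis with a second-moment bound on the projected, noised update and a balancing choice of $\eta$, $T$, $C$. The noise term ($\propto d\sigma^2$ after back-projection) and the "double projection" variance factor of order $d/r$ per block also agree with the paper's computation (which uses the Gaussian fourth-moment identity $\E[(u^\top a)^2\norm{u}^2]=d\norm{a}^2$ rather than an explicit Wishart argument).

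However, there is a genuine gap at exactly the point you label a "secondary subtlety." Your descent step rests on the claim that if $C$ is chosen so that clipping is "in expectation inactive," then $P^t\tilde R^t$ is an unbiased gradient estimator and $\E[\langle\nabla F(W^t),P^t\tilde R^t\rangle]=\norm{\nabla F(W^t)}^2$. With Gaussian projections this is never literally true: $\norm{(P^t_\ell)^\top G^t_{\ell,i}}$ is unbounded, so clipping occurs with positive probability at every step, and the clipping event is correlated with $P^t$ itself, which destroys the unbiasedness identity $\E[PP^\top g]=g$ precisely when you condition on it. The paper's proof is built around making this rigorous: it defines the no-clipping event $Q_t$ (a union over $T\,L\,n\,r$ Gaussian tail events with threshold $C/(\sqrt{L}\,\Gamma)$), converts conditional expectations to unconditional ones via $\E[X\mid Q]\le\E[X]/\Pr(Q)$, bounds the conditional second moment on the complement using truncated-Gaussian moment formulas, and then chooses $C\propto\Gamma\sqrt{L\log(\cdot)}$ so that the exponentially small tail probability kills the bias and the bounded-range term $D\,\Pr(\bar Q)/\eta$. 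This is also where the $\sqrt{L}$ and the logarithmic factors in the final rate actually originate (through $C^2\propto L\log(\cdot)$ entering the privacy-noise term), not from summing per-layer variance bounds as you suggest. Your two proposed fixes (concentration of $\norm{P^\top g}$, or absorbing the bias into a Lipschitz error term) point in the right direction—the first is essentially what the paper does—but as written the proposal does not supply the quantitative argument, and without it the claimed unbiasedness step and hence the stated rate do not follow.
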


\vskip 0.1in
\begin{algorithm}[h!]
\caption{\ouralg{}}
\label{algo:main}
\begin{algorithmic}[1]
\REQUIRE Dataset $X=\{\xi_1, \dots, \xi_n\}$, initial weights $\{W_{\ell}^0 \}_{\ell=1}^L$, learning rate $\eta$, subspace dimension $r$, subspace change frequency $F$, batch size $B$, clipping parameter $C$, noise level $\sigma$, total steps~$T$
\FOR {$t=1, 2, \dots, T$}
    \FOR{$\ell = L, L-1, \dots, 1$}
        \STATE $\{ G_{\ell,i}^t \}_{i=1}^B \gets \nabla_{W_{\ell}^t} f(\{ W_{\ell}^t \}_{\ell=1}^L ; \{ \xi_i \}_{i=1}^B)$ 
        \IF {$t \mod F = 0$} 
            \STATE Generate new $s_{\ell}^{t}$
        \ELSE
            \STATE $s_{\ell}^{t} \gets s_{\ell}^{t-1}$  
        \ENDIF
        \STATE $P_{\ell}^{t} \gets \mathcal{N}_{s_{\ell}^t}(0,\frac{1}{r}) \in \mathbb{R}^{m_{\ell} \times r}$
        \STATE $R_{\ell,i}^t \gets (P_{\ell}^{t})^{\top} G_{\ell,i}^t \; , \quad i=1,\dots,B$
    \ENDFOR 
    \STATE $R^t \gets  \frac{1}{B} \sum_{i=1}^B \text{clip} (R_i^t, C)$
    \STATE $\tilde{R}^t \gets R^t + \mathcal{N}(0, C^2\sigma^2 I) \in \mathbb{R}^{r\sum_{\ell=1}^L n_{\ell}}$
    \STATE {\small $\{ W_{\ell}^{t+1} \}^L_{\ell = 1} =\mathbf{Projected Adam Update}(\{ W_{\ell}^t \}^L_{\ell = 1}, \{ \tilde{R}_{\ell}^t \}^L_{\ell = 1}, \{ s_{\ell}^t \}_{\ell}^L, \eta)$}
\ENDFOR
\STATE Pick $\tau$ uniformly at random from $\{1, 2, \cdots, T\}$.
\STATE Return $\{ W_{\ell}^\tau \}_{\ell=1}^L$
\end{algorithmic}
\end{algorithm}

\vskip 0.25in
\begin{algorithm}[!h]
\caption{ProjectedAdamUpdate: Adam Update with Projected Moments}
\label{algo:adam}
\begin{algorithmic}%
\REQUIRE Model parameters $\{W_{\ell}^0 \}_{\ell=1}^L$, current projected first-order moments $\{M_{\ell}^{t-1} \in \mathbb{R}^{r \times n_{\ell}} \}_{\ell=1}^L$, current projected second-order moments $\{V_{\ell}^{t-1} \in \mathbb{R}^{r \times n_{\ell}}\}_{\ell=1}^L$, projector seeds $\{s_{\ell}^t\}_{\ell=1}^L$, step size $\eta$, decay rates $\beta_1, \beta_2$, iteration $t$, numerical stability constant $\phi$
\STATE $\alpha^t \gets \eta \frac{\sqrt{1 - \beta_2^t}}{1 - \beta_1^t}$ 
\FOR{$\ell = 1, 2, \dots, L$}  
    \STATE $M_{\ell}^t \gets \beta_1 M_{\ell}^{t-1} + (1-\beta_1)\tilde{R}_{\ell}^t$ %
    \STATE $V_{\ell}^t \gets \beta_2 V_{\ell}^{t-1} + (1-\beta_2)(\tilde{R}_{\ell}^t)^2$ %
    \STATE $P_{\ell}^{t} \gets \mathcal{N}_{s_{\ell}^t} \in \mathbb{R}^{m_{\ell} \times r}$
    \STATE $W_{\ell}^t \gets W_{\ell}^{t-1} - \alpha^t \cdot P_{\ell}^t(\frac{M_{\ell}^t}{\sqrt{V_{\ell}^t} + \phi})$
\ENDFOR
\STATE Return $\{W_{\ell}^t\}_{\ell=1}^L$, $\{M_{\ell}^t\}_{\ell=1}^L$, $\{V_{\ell}^t\}_{\ell=1}^L$
\end{algorithmic}
\end{algorithm}

\vskip 0.1in

\begin{figure}[b!]
    \centering
    \includegraphics[width=0.475\linewidth]{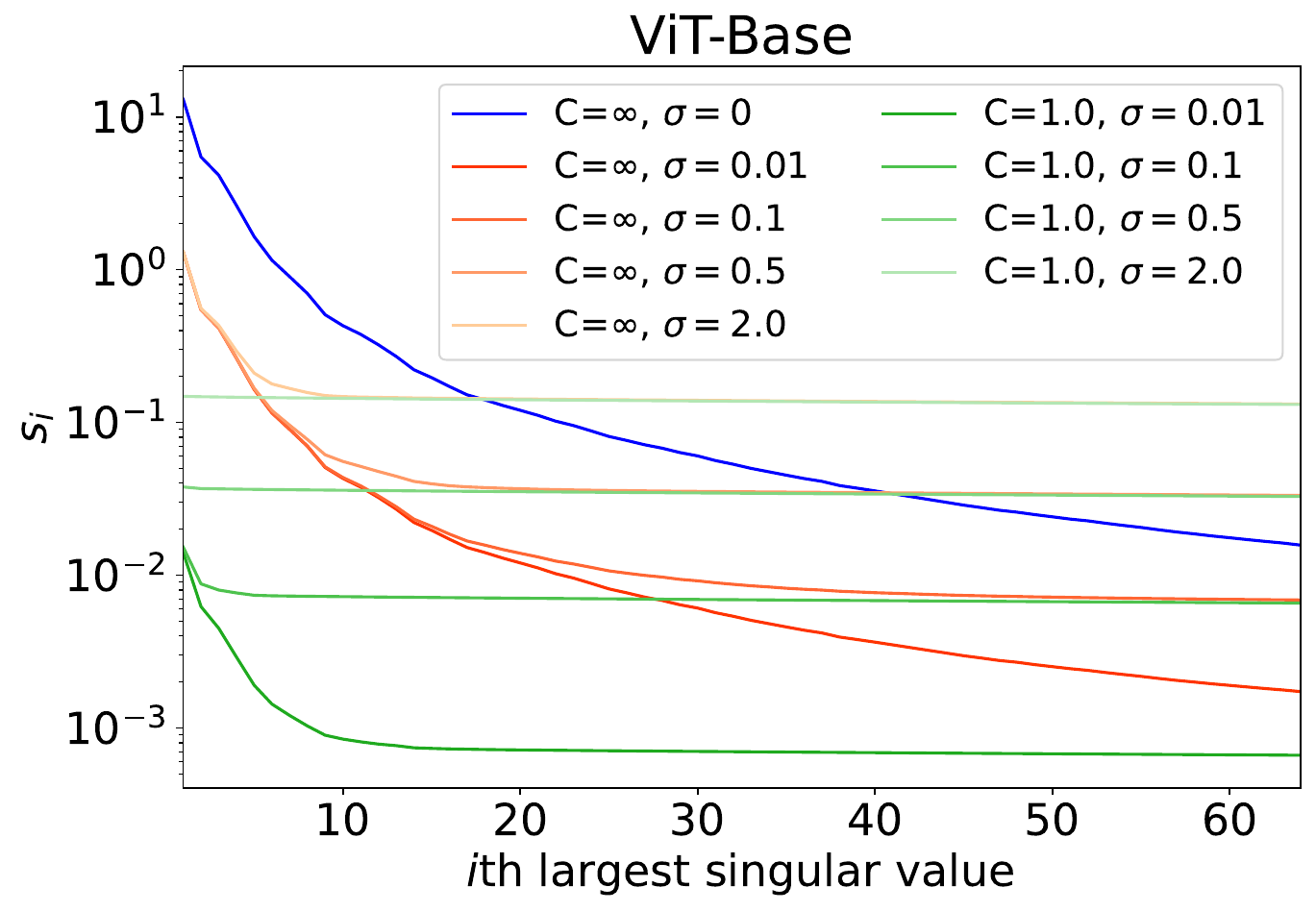}
    \includegraphics[width=0.475\linewidth]{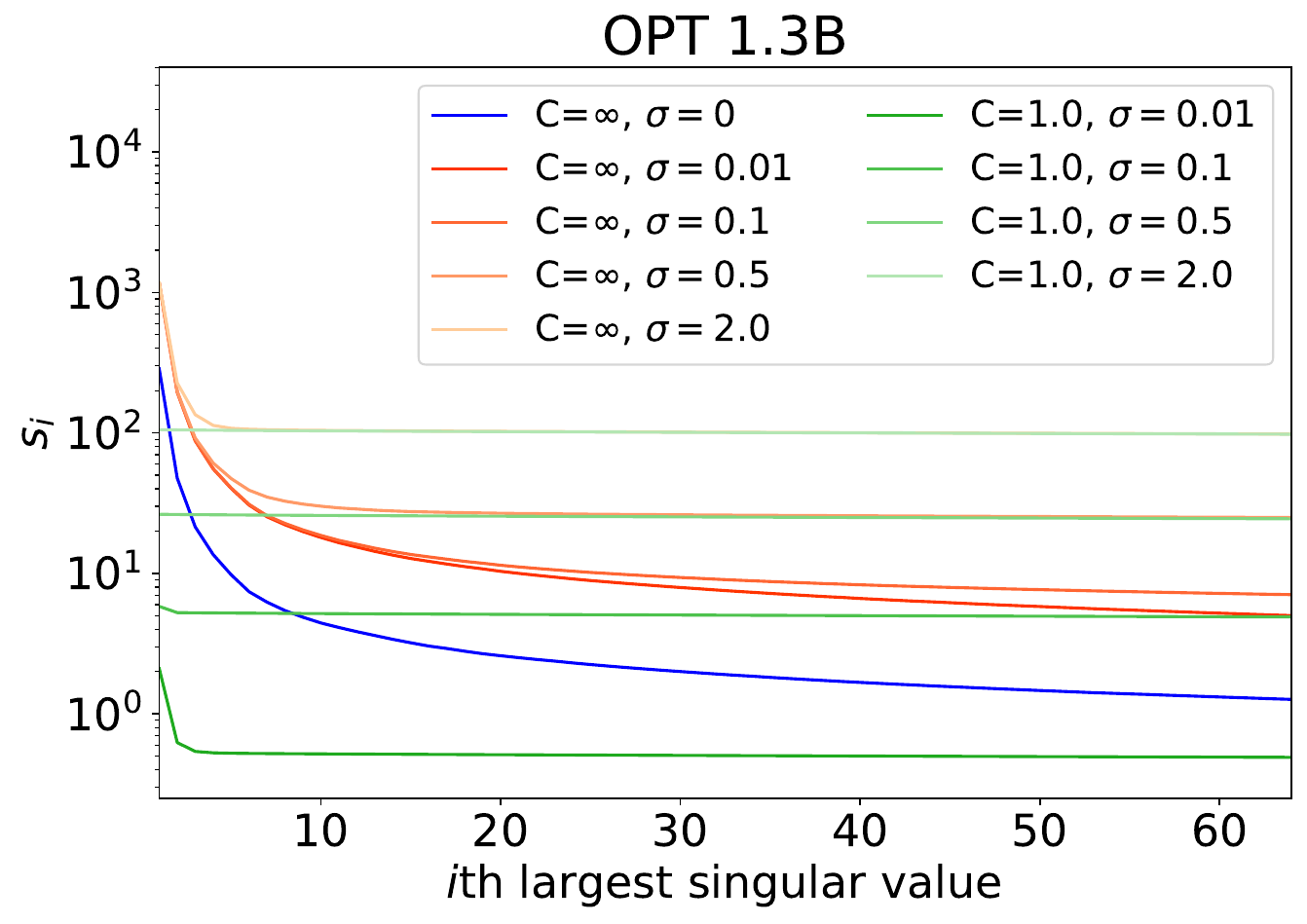}
    \caption{\footnotesize Left: singular values $s_i$ of layer gradient matrices with different clipping parameter $C$ and  noise levels $\sigma$, averaged across all layers of ViT-Base during training on CIFAR-10. Right: singular values of gradient matrices for OPT 1.3B during fine-tuning on SST-2. $C=\infty$ indicates no clipping. See \cref{app:experimental-details-vit} for details.}
    \label{fig:vit-svals}
\end{figure}
\normalsize

The formal statement of this theorem is Theorem~\ref{th:formalconv}, with assumptions detailed in~\cref{app:convergence}. We follow our result with some remarks that clarify the implications of our findings and their relevance to our setting.

\begin{remark}
    We recover the expected stationarity gap, i.e. the squared of the gradient norm at the returned iterate (up to log terms) of DP-SGD, $\c O\paren{\frac{\sqrt{d\log(1/\delta)}}{n\varepsilon}}$ \citep{wang2017differentially}.

\end{remark}

\begin{remark}
    In addition to the favorable scaling with respect to the number of layers, the dependence of the expected stationarity gap on the number of projections $r$ is minor and captured in the log terms. In fact, the log term decreases as $r$ increases up to a certain point (See~\cref{app:convergence}). However, the increase in random projections significantly lowers (by a factor of $r$) the time required for the algorithm to converge.
\end{remark}

These observations provide a broader context for our comparison with \citet{zhang2024dpzero}, which offers the first memory-efficient DP optimization guarantee. In the smooth case, their finite-difference step can be interpreted as a special case of our framework with rank-1 (\(r = 1\)) projection. Our analysis thus generalizes theirs to arbitrary \(r \geq 1\). However, their approach samples projection vectors uniformly from the unit sphere, requiring normalization of high-dimensional Gaussian vectors---an operation shown to be a bottleneck in DP optimization due to the cost of computing norms \citep{bu2021fast}. This overhead applies not only to gradients but also to random vectors used in projections, particularly when applied block-wise.

This exposes a key challenge: Gaussian-based projections have unbounded norm in the worst case, and clipping exacerbates this by increasing the expected norm of surviving samples. One potential solution involves truncating the Gaussian vector's components to lie within finite symmetric bounds (while adjusting for unbiasedness), ensuring convergence per the framework used in \citet{zhang2024dpzero}. However, this approach still incurs similar computational overheads as clipping - whether in terms of memory or runtime - due to the need to truncate each vector entry.

Through careful analysis, we demonstrate that sampling vectors with unbounded worst-case norms does not adversely affect convergence due to the exponential decay of tail probabilities, even in expectation. This insight allows us to exploit the efficient generation of standard normal vectors without requiring additional processing, thus addressing the computational challenges while maintaining theoretical guarantees.

\section{Experiments} \label{sec:experiments}
 We conduct experiments across three tasks to evaluate the performance of \ouralg{} and compare it against other DP methods: 1) \textbf{Pre-training} - training a VIT-base model from scratch on image classification tasks; 2) \textbf{Fine-tuning} - fine-tuning a RoBERTa-Large model on text classification tasks; 3) \textbf{Scalability} - demonstrating the scalability of \ouralg{} by successfully fine-tuning OPT models ranging from 1B to 6.7B parameters. Detailed experiment settings are provided in \cref{app:experimental-details}.

\subsection{Vision Transformer Training} \label{sec:exp-vit-pretraining}
To evaluate the effectiveness of \ouralg{} on pretraining tasks, we train Vision Transformer models (base model, 85M parameters) from scratch on MNIST \citep{deng2012mnist}, CIFAR10, and CIFAR100 \citep{krizhevsky2009learning}. To compare the performance and memory usage, we also train models using DP-Adam and na\"ive DP-GaLore, as discussed in \cref{sec:naive-dpgalore} (for pretraining, DP-LoRA and DP-Zero are not typically used, so we do not include comparisons). For all methods, we select the best clipping threshold and learning rate from a grid search, and then use those hyperparameters to train models for each method at $\eps=1,2,4,8$ privacy levels, with $\delta=\frac{1}{n}$. We do not use any additional public data or data augmentation. We discuss the experimental setup in detail in \cref{app:experimental-details-vit}.     

\textbf{Utility:} \Cref{fig:vit_results} shows both the final test accuracies for the different methods and datasets at varied privacy levels and the memory usage for each method across different batch sizes, with non-private Adam for comparison for the memory usage. See \cref{app:experimental-details-vit} for a full table of results. We find that while the na\"ive DP-GaLore approach performs significantly worse than DP-Adam, with an average decrease in accuracy across the different privacy levels of $12.1\%$ on MNIST, $7.8\%$ on CIFAR-10, and $4.1\%$ on CIFAR-100, \ouralg{} achieves comparable performance to DP-Adam. Averaged across privacy levels, \ouralg{} improves over DP-Adam by $1.3\%$ on MNIST, decreases by $2.5\%$ on CIFAR-10, and decreases by $0.6\%$ on CIFAR-100. While the accuracy of the models pre-trained with DP are relatively low, there are various strategies for improving performance, including data augmentation techniques \citep{de2022unlocking, bao2023dp} and training with limited public data \citep{bu2024pre}. However, in order to directly compare DP-Adam, the na\"ive DP-GaLore, and \ouralg{}, we do not integrate these techniques. In \cref{app:experimental-details-vit}, we also include results for fine-tuning on CIFAR-10 and CIFAR-100, where we find that \ouralg{} achieves similar accuracy as DP-Adam on CIFAR-10 and achieves significantly better accuracy on CIFAR-100.

\textbf{Memory Usage:} While achieving nearly the same accuracy as DP-Adam, \ouralg{} uses significantly less memory during training. When training on CIFAR-10 using a 24GB GPU, \ouralg{} allows a maximum batch size of around 165, while DP-Adam only allows for a maximum batch size of about 50. Consequently, given a fixed memory budget, \ouralg{} achieves a $25\%$ increase in throughput compared with DP-Adam, as shown in \cref{tab:vit-timing} in \cref{app:experimental-details-vit}.

\begin{figure*}
    \centering
    \includegraphics[width=\linewidth]{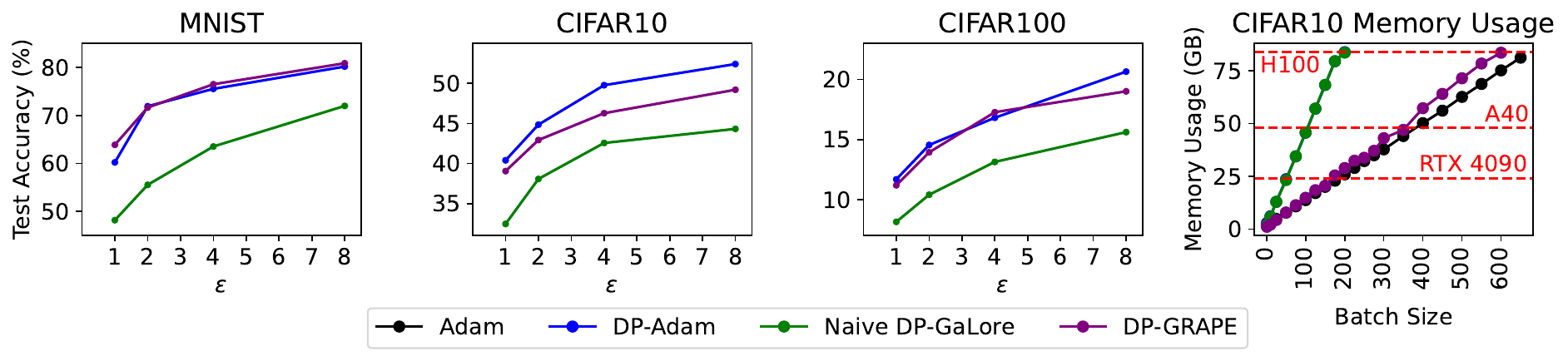}
    \caption{\small Vision Transformer pre-training results for MNIST, CIFAR-10, and CIFAR100 at different $\varepsilon$ privacy levels, and memory usage for different methods during training with varying batch size, with non-private Adam for comparison. Note that the memory usage for DP-Adam is essentially the same as for na\"ive DP-GaLore. For reference, training a ViT without DP achieves around 99\% on MNIST, 93.58\% on CIFAR-10, and 73.81\% on CIFAR-100 \cite{lee2021visiontransformersmallsizedatasets}. See \cref{app:experimental-details-vit} for detailed results in tabular form and the experiment setup.}
    \label{fig:vit_results}
\end{figure*}

\subsection{RoBERTa Fine-Tuning} \label{sec:exp-roberta-finetuning}
We evaluate \ouralg{} on NLP fine-tuning tasks by fine-tuning RoBERTa-Large models (355M parameters) \citep{liu2019roberta} from Hugging Face\footnotemark{} on different sentence classification tasks. Our experimental setup is the same as in \citet{malladi2023fine} and \citet{zhang2024dpzero}: we use a few-shot setting with $512$ samples for each class in all of the datasets. We fine-tune models with \ouralg{} at both $(\eps=2, \delta=1e-5)$ and $(\eps=6,\delta=1e-5)$ privacy. We detail our hyperparameter selection in \cref{app:experimental-details}. \footnotetext{{Link to the checkpoint: \url{https://huggingface.co/FacebookAI/roberta-large}}}

\textbf{Utility:} \Cref{tab:roberta-fewshot-results} shows the results for \ouralg{} along with other DP and non-private baselines. \ouralg{} achieves a higher average test accuracy at both $\varepsilon=2$ and $\varepsilon=6$ than DPZero on all but one of the six datasets we tested on. On average, across the six datasets and two privacy levels, \ouralg{} improves upon the test accuracy of DPZero by $3.7\%$. Furthermore, \ouralg{} is competitive with DP-Adam, achieving an average of a $1.0\%$ increase in accuracy over the different datasets and privacy levels. To investigate the effect of the subspace dimension $r$ on the utility, in \cref{app:experimental-details-roberta} we conduct an ablation study of RoBERTa-Large on SST-2, SNLI, and TREC for $r$ ranging from $4$ to $256$. We observe that the accuracy drops significantly for larger $r$. While for non-private parameter efficient methods such as LoRA \cite{hu2021lora} there is often a small drop in accuracy on fine-tuning tasks for larger dimensions, we hypothesize that in the DP setting the interaction of the variance induced by the projection with the increasing total power of the injected noise for larger $r$ causes more significant degradation. Thus, we recommend that a small $r$ (e.g., $r \leq 16$) is used for fine-tuning with \ouralg{}.

\textbf{Memory Usage:} In addition to achieving comparable utility, \ouralg{} uses significantly less memory than DP-Adam, which we illustrate in \cref{fig:roberta-opt-memory-usage}. When fine-tuning on SST-2 with a batch size of 40, DP-Adam uses 78.1 GB of memory, whereas \ouralg{} uses only 24.4 GB of memory, a $68.7\%$ reduction. While DPZero is very memory efficient, it takes about 10 times as many iterations to converge, which we illustrate in \cref{fig:roberta-convergence} in \cref{app:experimental-details-roberta}. Consequently, for the same experimental setup we use to generate the test results, even though DPZero has cheaper iterations, it takes almost 3 times as long as \ouralg{} to run (\cref{tab:roberta-timing}). As compared to the Vision Transformer pre-training, there is a bigger gap in memory usage between \ouralg{} and non-private Adam, which is due to gradients from the embedding layers and language modeling head not being projected. In \cref{app:experimental-details-roberta} we also show how varying the subspace dimension $r$ affects the total memory usage of \ouralg{}.

\begin{table}[t]
\caption{\small Mean and standard error of final test accuracy over three different seeds for few-shot ($k=512$) fine-tuning of RoBERTa-Large on different datasets, for different DP and non-private methods. The best DP result for each privacy level and dataset is in bold. See \cref{app:experimental-details-roberta} for experiment details.}
\begin{center}
\begin{small}
\setlength{\tabcolsep}{4pt}
\begin{tabular}{lcccccc}
\toprule
\textbf{Task} & \textbf{SST-2} & \textbf{SST-5} & \textbf{SNLI} & \textbf{MNLI} & \textbf{RTE} & \textbf{TREC} \\
\midrule
AdamW (non-private) & $93.1 \pm 0.3$ & $56.6 \pm 0.3$ & $86.4 \pm 0.8$ & $81.4 \pm 0.9$ & $83.6 \pm 1.6$ & $95.9 \pm 0.2$ \\
DP-Adam ($\varepsilon=6$) & $91.6 \pm 1.2$ & $49.0 \pm 0.3$ & $81.5 \pm 1.4$ & $76.3 \pm 0.9$ & $\mathbf{77.3 \pm 1.1}$ & $89.9 \pm 0.8$ \\
DP-Adam ($\varepsilon=2$) & $90.5 \pm 1.5$ & $\mathbf{47.5 \pm 0.5}$ & $74.6 \pm 1.0$ & $\mathbf{70.3 \pm 0.8}$ & $\mathbf{72.8 \pm 0.9}$ & $85.0 \pm 0.5$ \\
\midrule
LoRA (non-private) & $93.3 \pm 0.4$ & $55.3 \pm 1.0$ & $85.9 \pm 0.7$ & $82.2 \pm 0.7$ & $84.2 \pm 0.4$ & $94.6 \pm 0.4$ \\
DP-LoRA ($\varepsilon=6$) & $91.0 \pm 1.3$ & $48.8 \pm 0.5$ & $81.0 \pm 1.5$ & $72.8 \pm 1.8$ & $74.7 \pm 1.3$ & $89.2 \pm 0.8$ \\
DP-LoRA ($\varepsilon=2$) & $90.2 \pm 1.2$ & $47.1 \pm 0.4$ & $74.7 \pm 1.6$ & $65.7 \pm 0.9$ & $69.2 \pm 1.1$ & $83.2 \pm 2.3$ \\
\midrule
MeZO (non-private) & $92.5 \pm 0.3$ & $50.8 \pm 0.8$ & $80.4 \pm 0.6$ & $69.2 \pm 0.3$ & $72.8 \pm 1.0$ & $88.9 \pm 0.1$ \\
DPZero ($\varepsilon=6$) & $92.2 \pm 0.3$ & $\mathbf{49.3 \pm 0.6}$ & $77.8 \pm 1.0$ & $67.4 \pm 0.3$ & $71.9 \pm 0.9$ & $87.6 \pm 0.9$ \\
DPZero ($\varepsilon=2$) & $91.8 \pm 0.1$ & $47.1 \pm 0.9$ & $73.6 \pm 0.9$ & $62.7 \pm 0.9$ & $70.4 \pm 0.7$ & $82.0 \pm 1.6$ \\
\midrule
\ouralg{} ($\varepsilon=6$) & $\mathbf{93.3 \pm 0.4}$ & $49.1 \pm 0.1$ & $\mathbf{83.5 \pm 0.4}$ & $\mathbf{76.7 \pm 0.4}$ & $76.4 \pm 0.8$ & $\mathbf{92.7 \pm 1.0}$ \\
\ouralg{} ($\varepsilon=2$) & $\mathbf{92.6 \pm 0.5}$ & $44.5 \pm 0.4$ & $\mathbf{79.6 \pm 0.4}$ & $68.8 \pm 1.2$ & $\mathbf{72.8 \pm 0.9}$ & $\mathbf{88.1 \pm 2.2}$ \\
\midrule
Zero-Shot & $79.0$ & $35.5$ & $50.2$ & $48.8$ & $51.4$ & $32.0$ \\
\bottomrule
\end{tabular}
\end{small}
\label{tab:roberta-fewshot-results}
\end{center}
\end{table}

\subsection{OPT Fine-Tuning} \label{sec:exp-opt-finetuning}
To assess the scalability of \ouralg{} to larger models, we use it to fine-tune OPT models \citep{zhang2022opt} with 1.3B, 2.7B, and 6.7B parameters from Hugging Face\footnote{Link to the checkpoints: \url{https://huggingface.co/collections/facebook/opt-66ed00e15599f02966818844}} on both classification and generation tasks. We use the same setup as in \citet{malladi2023fine} and \citet{zhang2024dpzero}. For experimental details, see \cref{app:experimental-details-opt}. \Cref{tab:opt-classification-results} and \cref{tab:opt-generation-results} show the results for \ouralg{} and baselines for the classification and generation tasks, respectively.

\begin{figure}[tb!]
    \centering
    \includegraphics[width=0.8\linewidth]{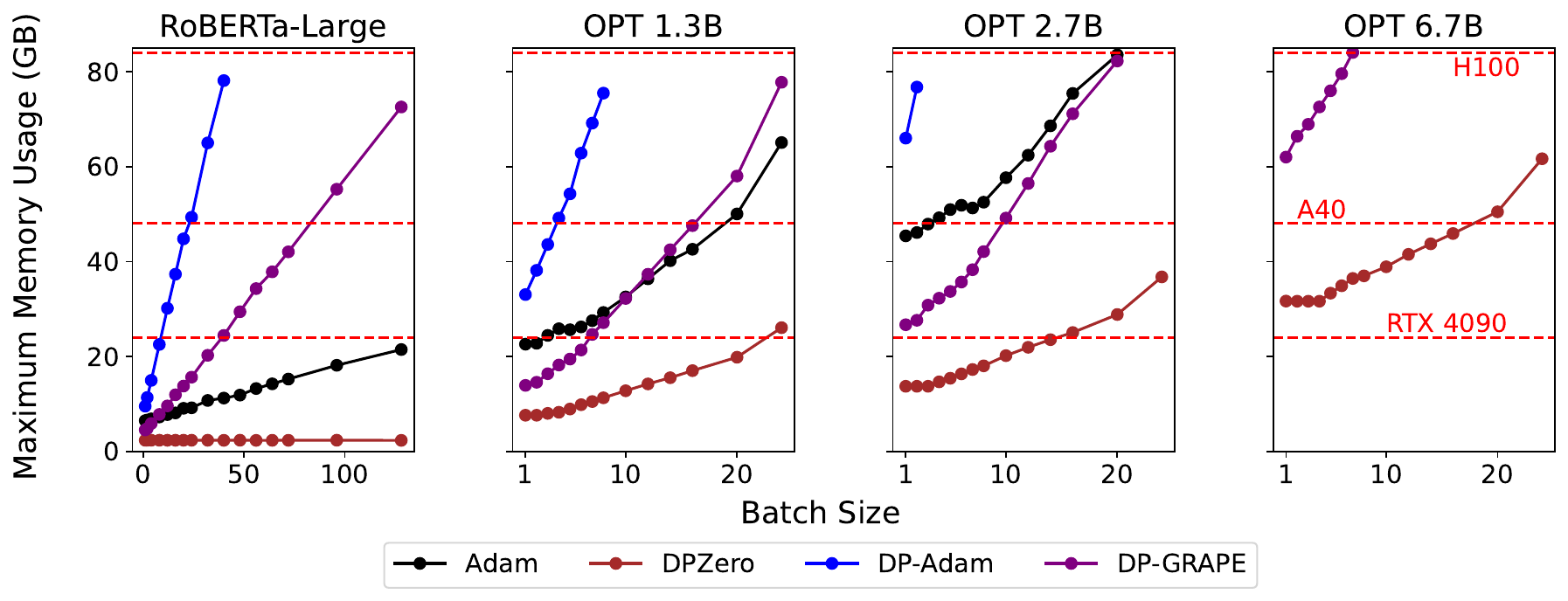}
    \caption{\small Maximum memory usage for fine-tuning RoBERTa-Large on SST-2 and OPT models on SQuAD using Adam, DP-Adam, DPZero, and \ouralg{} with varying batch size. See \cref{app:experimental-details-roberta} and \cref{app:experimental-details-opt} for details.}
    \label{fig:roberta-opt-memory-usage}
\end{figure}

\textbf{Scaling to Larger Models:} While fine-tuning all parameters of the 6.7B model on any of the datasets with DP-Adam or even non-private Adam on a single 80GB GPU exceeds the available memory, \ouralg{} scales to the 6.7B model. We show the memory usage for \ouralg{} and comparison methods with different model sizes in \cref{fig:roberta-opt-memory-usage}. \ouralg{} achieves better utility than DPZero on 18 of the 24 total combinations of model sizes, datasets, and privacy levels, and better utility than DP-Adam on 9 of 16 total combinations (excluding the 6.7B model, for which DP-Adam cannot be used). Furthermore, because \ouralg{} requires 10 times fewer iterations to converge as DP-Zero, it reduces the total fine-tuning time for the 6.7B model on SQuAD using a single H100 GPU by more than 6 times, as shown in \cref{tab:opt-timing} in \cref{app:experimental-details-opt}.

\begin{table}[tb!]
\caption{\small Mean and standard error of final test accuracy over three different seeds for few-shot ($k=1000$) fine-tuning of OPT models on SST-2 and BoolQ classification tasks, for different DP and non-private methods. The best DP result for each privacy level and dataset is in bold. OOM indicates out-of-memory on an 80GB GPU with a batch size of 1 and gradient accumulation. See \cref{app:experimental-details-opt} for experiment details.}
\begin{center}
\begin{small}
\setlength{\tabcolsep}{4pt}
\begin{tabular}{lcccccc}
\toprule
\textbf{Model} & \multicolumn{2}{c}{OPT-1.3B} & \multicolumn{2}{c}{OPT-2.7B} & \multicolumn{2}{c}{OPT-6.7B} \\
\textbf{Task} & \textbf{SST-2} & \textbf{BoolQ} & \textbf{SST-2} & \textbf{BoolQ} & \textbf{SST-2} & \textbf{BoolQ} \\
\midrule
MeZO (non-private) & $88.2 \pm 0.9$ & $63.2 \pm 0.8$ & $91.9 \pm 0.5$ & $65.3 \pm 1.3$ & $93.0 \pm 0.2$ & $67.4 \pm 2.3$ \\
\midrule
DPZero ($\varepsilon=6$) & $88.2 \pm 1.1$ & $62.4 \pm 0.8$ & $91.5 \pm 1.7$ & $\mathbf{65.4 \pm 1.6}$ & $92.6 \pm 0.7$ & $\mathbf{66.8 \pm 1.6}$ \\
DPZero ($\varepsilon=2$) & $86.8 \pm 1.7$ & $61.6 \pm 1.1$ & $90.5 \pm 0.9$ & $\mathbf{63.7 \pm 0.7}$ & $90.6 \pm 1.3$ & $\mathbf{63.7 \pm 0.7}$ \\
\midrule
DP-Adam ($\varepsilon=6$) & $\mathbf{91.2 \pm 0.2}$ & $62.3 \pm 0.4$ & $\mathbf{93.2 \pm 0.1}$ & $62.9 \pm 0.4$ & OOM & OOM \\
DP-Adam ($\varepsilon=2$) & $85.9 \pm 0.8$ & $59.8 \pm 0.7$ & $92.4 \pm 0.04$ & $62.5 \pm 0.3$ & OOM & OOM \\
\midrule
\ouralg{} ($\varepsilon=6$) & $90.8 \pm 0.5$ & $\mathbf{62.5 \pm 0.3}$ & $93.0 \pm 0.5$ & $62.7 \pm 0.6$ & $\mathbf{94.2 \pm 0.3}$ & $63.4 \pm 0.8$ \\
\ouralg{} ($\varepsilon=2$) & $\mathbf{90.5 \pm 0.3}$ & $\mathbf{62.0 \pm 0.4}$ & $\mathbf{92.5 \pm 0.3}$ & $61.6 \pm 1.3$ & $\mathbf{91.2 \pm 0.2}$ & $\mathbf{63.7 \pm 0.2}$ \\
\midrule
Zero-Shot & $53.6$ & $45.3$ & $56.3$ & $47.7$ & $61.2$ & $59.4$ \\
\bottomrule
\end{tabular}
\end{small}
\label{tab:opt-classification-results}
\end{center}
\end{table}

\vskip 0.1in
\begin{table*}[!ht]
\caption{\small Mean and standard error of final f1 score over three different seeds for few-shot ($k=1000$) fine-tuning of OPT models on SQuAD and DROP generation tasks, for different DP and non-private methods. The best DP result for each privacy level and dataset is in bold. OOM indicates out of memory on an 80GB GPU with a batch size of 1 and gradient accumulation.}

\begin{center}
\begin{small}
\setlength{\tabcolsep}{4pt}
\begin{tabular}{lcccccc}
\toprule
\textbf{Model} & \multicolumn{2}{c}{OPT-1.3B} & \multicolumn{2}{c}{OPT-2.7B} & \multicolumn{2}{c}{OPT-6.7B} \\
\textbf{Task} & \textbf{SQuAD} & \textbf{DROP} & \textbf{SQuAD} & \textbf{DROP} & \textbf{SQuAD} & \textbf{DROP} \\
\midrule
MeZO (non-private) & $73.5 \pm 1.2$ & $24.4 \pm 0.2$ & $76.3 \pm 0.8$ & $25.5 \pm 1.2$ & $79.7 \pm 1.1$ & $28.8 \pm 0.7$ \\
\midrule
DPZero ($\varepsilon=6$) & $72.6 \pm 0.8$ & $24.7 \pm 1.0$ & $75.7 \pm 1.5$ & $24.6 \pm 0.5$ & $\mathbf{79.5 \pm 0.9}$ & $\mathbf{28.4 \pm 1.3}$ \\
DPZero ($\varepsilon=2$) & $70.1 \pm 1.6$ & $23.9 \pm 1.2$ & $71.9 \pm 1.2$ & $23.1 \pm 0.9$ & $77.1 \pm 1.0$ & $27.6 \pm 0.7$ \\
\midrule
DP-Adam ($\varepsilon=6$) & $76.9 \pm 0.2$ & $25.9 \pm 1.2$ & $81.4 \pm 0.7$ & $\mathbf{26.3 \pm 1.1}$ & OOM & OOM \\
DP-Adam ($\varepsilon=2$) & $74.1 \pm 0.2$ & $\mathbf{25.2 \pm 1.8}$ & $77.8 \pm 0.4$ & $\mathbf{24.9 \pm 0.7}$ & OOM & OOM \\
\midrule
\ouralg{} ($\varepsilon=6$) & $\mathbf{77.2 \pm 0.1}$ & $\mathbf{26.1 \pm 1.3}$ & $\mathbf{82.0 \pm 0.3}$ & $25.5 \pm 1.2$ & $79.5 \pm 0.2$ & $28.2 \pm 1.3$ \\
\ouralg{} ($\varepsilon=2$) & $\mathbf{76.7 \pm 0.7}$ & $25.0 \pm 1.1$ & $\mathbf{79.2 \pm 0.7}$ & $24.1 \pm 0.3$ & $\mathbf{77.6 \pm 0.4}$ & $\mathbf{27.8 \pm 0.6}$ \\
\midrule
Zero-Shot & $26.8$ & $11.1$ & $29.8$ & $9.7$ & $36.5$ & $17.8$ \\
\bottomrule
\end{tabular}
\end{small}
\label{tab:opt-generation-results}
\end{center}
\end{table*}
\vskip 0.05in

It is important to note that there are two key differences between \ouralg{} and non-private Adam that affect the relative total memory usage: The formation of the sample gradients, and the optimizer states. Because we must form the sample gradients for \ouralg{} to enforce DP, the gradient memory increases linearly with batch size (although with a much smaller slope than for DP-Adam due to projection). Because only the batch gradient is required for non-private Adam, the gradient memory is constant with respect to the batch size, and increases in memory for increasing batch sizes are dominated by the activations. However, \ouralg{} reduces memory as compared to non-private Adam with the use of projected optimizer states. Because the memory usage of optimizer states depends on the size of the model, we can conclude that the memory usage of DP-GRAPE will be the most favorable as compared to non-private Adam for large models with small batch sizes.

We can apply this principle to the recorded memory usage for RoBERTa-Large and the OPT models shown in \cref{fig:roberta-opt-memory-usage}. For RoBERTa-Large, we can observe that DP-GRAPE uses slightly less memory than non-private Adam for the smallest batch sizes, but as the batch size becomes large, the memory usage of the (projected) sample gradients dominate the memory savings from using projected optimizer states, and hence DP-GRAPE uses more memory. Conversely, for the larger OPT models, using projected optimizer states confers a more significant memory reduction, which explains why we observe a large memory reduction versus Adam for small batch sizes, but similar or slightly increased memory usage for the larger batch sizes.

\section{Conclusion} \label{sec:conclusion}
We have introduced \ouralg{}, a memory-efficient DP training method that achieves utility comparable to standard first-order DP methods while significantly reducing the memory usage of per-sample gradients and optimizer states. We experimentally verify \ouralg{} on a variety of tasks including pre-training Vision Transformers, fine-tuning RoBERTa-Large on text classification tasks, and fine-tuning OPT models of different sizes on text classification and generation tasks. For the OPT fine-tuning, \ouralg{} is able to scale to a model size of 6.7B parameters, on which DP-Adam exceeds available memory even with a batch size of 1. Theoretically, \ouralg{} achieves a similar privacy-utility trade-off to DP-SGD. 

In the future, orthogonal techniques like Ghost Clipping \citep{li2021large} and Book-Keeping \citep{bu2023differentially}, which reduce memory by avoiding per-sample gradient instantiation (but do not change the underlying optimization algorithm as \ouralg{} does), can be combined with \ouralg{} to further lower memory usage at the cost of increased per-iteration time. Since they only affect gradient computation and not the optimization algorithm, the utility-privacy trade-off of \ouralg{} remains unchanged. Another possible extension could be applying the approach of \cite{bu2021fast} that estimates the gradients and their norms using random projections. In our setting, we could apply the method to the projected gradients and apply their proposed randomized clipping procedure. Conditioning on the projection matrix, we can directly reuse their privacy analysis to obtain the corresponding privacy guarantees for our method. Because our gradients live in an even lower-dimensional projected space, this approach would provide additional memory savings during norm estimation and clipping.

Other potential improvements to \ouralg{} include adapting the projection dimension across layers based on a selection criterion and utilizing non-Gaussian projection matrices, such as those sampled from a Stiefel manifold. Overall, by reducing resource requirements, \ouralg{} empowers resource-constrained communities and institutions to build and leverage large models while ensuring data privacy, democratizing access to privacy-preserving AI.

\subsubsection*{Broader Impact Statement}
This work introduces a memory-efficient differentially private algorithm for training large models, addressing a significant barrier to equitable AI development. By reducing resource requirements, it empowers resource-constrained communities and institutions to build and leverage large models while ensuring data privacy. This advancement democratizes access to privacy-preserving AI, fostering inclusivity and enabling broader participation in the development of impactful machine learning solutions.

\section{Acknowledgments}
This work was partially supported by a gift from Meta and a gift from Amazon.

\bibliography{ref}
\bibliographystyle{tmlr}

\appendix

\section{DP-Adam} \label{app:dpadam}
Here, we detail the standard DP-Adam algorithm (with flat clipping) using our notation.
\vskip 0.2cm
\begin{algorithm}
\label{algo:dp-adam}
\caption{DP-Adam}
\begin{algorithmic}[1]
\REQUIRE Dataset $X=\{\xi_1, \dots, \xi_n\}$, model parameters $\{W_{\ell}^0 \}_{\ell=1}^L$, learning rate $\eta$, decay rates $\beta_1, \beta_2$, batch size $B$, total iterations $T$
\FOR {$t=1, 2, \dots, T$}
    \FOR{$\ell = L, L-1, \dots, 1$}
        \STATE $\{ G_{\ell,i}^t \}_{i=1}^B \gets \nabla_{W_{\ell}^t} f(\{ W_{\ell}^t \}_{\ell=1}^L ; \{ \xi_i \}_{i=1}^B)$
    \ENDFOR 
    \STATE $\tilde{G}^t \gets \frac{1}{B}(\sum_{i=1}^B \text{clip}(G_i^t, C) + \mathcal{N}(0, C^2 \sigma^2 I) \in \mathbb{R}^d)$
    \STATE $\alpha^t \gets \eta \frac{\sqrt{1 - \beta_2^t}}{1 - \beta_1^t}$ 
    \FOR{$\ell = 1, 2, \dots, L$} 
        \STATE $M_{\ell}^t \gets \beta_1 M_{\ell}^{t-1} + (1-\beta_1)\tilde{G}_{\ell}^t$ 
        \STATE $V_{\ell}^t \gets \beta_2 V_{\ell}^{t-1} + (1-\beta_2)(\tilde{G}_{\ell}^t)^2$ 
        \STATE $W_{\ell}^t \gets W_{\ell}^{t-1} - \alpha^t \cdot (\frac{M_{\ell}^t}{\sqrt{V_{\ell}^t} + \phi})$
    \ENDFOR
\ENDFOR
\STATE Return $\{ W_{\ell}^T \}_{\ell=1}^L$
\end{algorithmic}
\end{algorithm}

\section{GaLore and Na\"ive DP-GaLore} \label{app:galore-details}
Here, we detail the GaLore algorithm introduced by \citet{zhao2024galore} and the na\"ive version of DP-GaLore that we discuss in \cref{sec:naive-dpgalore}. For na\"ive DP-GaLore, gradients are privatized prior to projection and before the SVD updates, so that the subspaces obtained from the SVD can be used in subsequent iterations with no privacy loss.
\vskip 0.2cm
\begin{algorithm}
\label{algo:galore}
\caption{GaLore}
\begin{algorithmic}[1]
\REQUIRE Dataset $X=\{\xi_1, \dots, \xi_n\}$, model parameters $\{W_{\ell}^0 \}_{\ell=1}^L$, learning rate $\eta$, subspace dimension $r$, subspace change frequency $F$, batch size $B$, total iterations $T$
\FOR {$t=1, 2, \dots, T$}
    \FOR{$\ell = L, L-1, \dots, 1$}
        \STATE $G_{\ell}^t \gets \nabla_{W_{\ell}^t} f(\{ W_{\ell}^t \}_{\ell=1}^L ; \{ \xi_i \}_{i=1}^B)$
        \IF {$t \mod F = 0$}
            \STATE $U, S, V \gets \text{SVD} (G_{\ell}^t)$
            \STATE $P_{\ell}^t \gets U[:, :r]$
        \ELSE
            \STATE $P_{\ell}^t \gets P_{\ell}^{t-1}$ 
        \ENDIF
        \STATE $R_{\ell}^t \gets (P_{\ell}^t)^{\top} G_{\ell}^t $ 
    \ENDFOR 
    \STATE $\{ W_{\ell}^{t+1} \}^L_{\ell = 1} =\mathbf{ProjectedAdamUpdate}(\{ W_{\ell}^t \}^L_{\ell = 1}, \{ R_{\ell}^t \}^L_{\ell = 1}, \{ P_{\ell}^t \}_{\ell=1}^L, \eta)$ 
\ENDFOR
\STATE Return $\{ W_{\ell}^T \}_{\ell=1}^L$
\end{algorithmic}
\end{algorithm}
\clearpage

\begin{algorithm}
\label{algo:naive-dpgalore}
\caption{Na\"ive DP-GaLore}
\begin{algorithmic}[1]
\REQUIRE Dataset $X=\{\xi_1, \dots, \xi_n\}$, model parameters $\{W_{\ell}^0 \}_{\ell=1}^L$, learning rate $\eta$, subspace dimension $r$, subspace change frequency $F$, batch size $B$, clipping parameter $C$, noise level $\sigma$, total iterations $T$
\FOR {$t=1, 2, \dots, T$}
    \FOR{$\ell = L, L-1, \dots, 1$}
        \STATE $\{ G_{\ell,i}^t \}_{i=1}^B \gets \nabla_{W_{\ell}^t} f(\{ W_{\ell}^t \}_{\ell=1}^L ; \{ \xi_i \}_{i=1}^B)$
    \ENDFOR
    \STATE $\tilde{G}^t \gets \frac{1}{B}(\sum_{i=1}^B \text{clip}(G_i^t, C) + \mathcal{N}(0, C^2 \sigma^2 I) \in \mathbb{R}^d)$
    \FOR{$\ell = L, L-1, \dots, 1$}
        \IF {$t \mod F = 0$}
            \STATE $U, S, V \gets \text{SVD} (\tilde{G}_{\ell}^t)$
            \STATE $P_{\ell}^t \gets U[:, :r]$
        \ELSE
            \STATE $P_{\ell}^t \gets P_{\ell}^{t-1}$ 
        \ENDIF
        \STATE $\tilde{R}_{\ell}^t \gets (P_{\ell}^t)^{\top} \tilde{G}_{\ell}^t $ 
    \ENDFOR 
    \STATE $\{ W_{\ell}^{t+1} \}^L_{\ell = 1} =\mathbf{ProjectedAdamUpdate}(\{ W_{\ell}^t \}^L_{\ell = 1}, \{ \tilde{R}_{\ell}^t \}^L_{\ell = 1}, \{ P_{\ell}^t \}_{\ell=1}^L, \eta)$ 
\ENDFOR
\STATE Return $\{ W_{\ell}^T \}_{\ell=1}^L$
\end{algorithmic}
\end{algorithm}

\section{Experiment Details} \label{app:experimental-details}
\subsection{Vision Transformer Training} \label{app:experimental-details-vit}
We use the same grid search for all three methods, selecting the clipping parameter $C$ from $\{0.1, 1, 10\}$ and the learning rate from $\{1\text{e}-4, 5\text{e}-4, 1\text{e}-3, 5\text{e}-3\}$. For the grid-search, we split the training set of each dataset randomly into $80\%$ training and $20\%$ testing data and select the combination of $C$ and learning rate which achieves the highest validation accuracy during training. We use a privacy level of $(\varepsilon=2, \delta=\frac{1}{n})$ for the grid search. The best hyperparameters are then used to train models on the entire training set at the different privacy levels $\varepsilon=1,2,4,8$, which are evaluated on the original testing set. \Cref{tab:vit-grid-search-hyperparameters} lists the selected $C$ and learning rate for each method and dataset. For all experiments (both the grid search and final training runs), we use a total batch size of 1000 (which is achieved through gradient accumulation) and train for 60 epochs. In \cref{tab:vit-full-results} we list the best test accuracy during training for each method on the different datasets and privacy levels. 

\vskip 0.2cm
{
\renewcommand{\arraystretch}{1.25}
\begin{table}[h!]
\caption{\small Clipping parameter $C$ and learning rate selected from grid search for each method and dataset for Vision Transformer pre-training.}
\vskip 0.05in
\centering
\begin{small}
\begin{tabular}{c|ccc|ccc}
\toprule
Method & \multicolumn{3}{c|}{$C$} & \multicolumn{3}{c}{Learning Rate} \\
\cmidrule(lr){2-4} \cmidrule(lr){5-7}
& MNIST & CIFAR-10 & CIFAR-100 & MNIST & CIFAR-10 & CIFAR-100 \\
\midrule
DP-Adam & 10.0 & 0.1 & 1.0 & 1\text{e}-3 & 1\text{e}-3 & 5\text{e}-4 \\
Naïve DP-GaLore & 1.0 & 1.0 & 0.1 & 5\text{e}-4 & 5\text{e}-4 & 1\text{e}-3 \\
\ouralg{} & 0.1 & 1.0 & 10.0 & 5\text{e}-3 & 1\text{e}-3 & 1\text{e}-3 \\
\bottomrule
\end{tabular}
\end{small}
\label{tab:vit-grid-search-hyperparameters}
\end{table}
}

\begin{table}
\caption{\small Vision Transformer pretraining results for MNIST, CIFAR-10, and CIFAR100 at different privacy levels (best test accuracy during training).}
\begin{center}
\begin{small}
\begin{tabular}{lccc}
\toprule
\textbf{Task} & \textbf{MNIST} & \textbf{CIFAR-10} & \textbf{CIFAR-100} \\
\midrule
DP-Adam ($\varepsilon=1$) & 60.2 & 40.4 & 11.7 \\
DP-Adam ($\varepsilon=2$) & 71.9 & 44.9 & 14.5 \\
DP-Adam ($\varepsilon=4$) & 75.5 & 49.8 & 16.8 \\
DP-Adam ($\varepsilon=8$) & 80.2 & 52.4 & 20.7 \\
\midrule
Na\"ive DP-GaLore ($\varepsilon=1$) & 48.2 & 32.5 & 8.1 \\
Na\"ive DP-GaLore ($\varepsilon=2$) & 55.6 & 38.1 & 10.4 \\
Na\"ive DP-GaLore ($\varepsilon=4$) & 63.5 & 42.6 & 13.1 \\
Na\"ive DP-GaLore ($\varepsilon=8$) & 72.0 & 44.3 & 15.6 \\
\midrule
\ouralg{} ($\varepsilon=1$) & 63.9 & 39.1 & 11.2 \\
\ouralg{} ($\varepsilon=2$) & 71.7 & 42.9 & 13.9 \\
\ouralg{} ($\varepsilon=4$) & 76.5 & 46.3 & 17.3 \\
\ouralg{} ($\varepsilon=8$) & 80.9 & 49.2 & 19.0 \\
\bottomrule
\end{tabular}
\end{small}
\label{tab:vit-full-results}
\end{center}
\end{table}

\vskip 0.3cm

For the memory experiment with results shown in \cref{fig:vit_results}, we train for 5 steps and record the maximum memory reserved by PyTorch \citep{paszke2019pytorch} using the \texttt{torch.cuda.max\_memory\_reserved()} function, for a range of batch sizes. For all sizes, we use a gradient accumulation step so that accumulated gradients are included in the memory accounting.

To generate the timing results shown in \cref{tab:vit-timing}, we train each method for 1 epoch and then extrapolate the time taken to complete 60 epochs of training. We match the setup we use to generate the results in \cref{fig:vit_results}, with a total batch size of 1000 that is achieved by gradient accumulation. For Adam and \ouralg{} we use a physical batch size of 500 (which uses 63.0GB and 70.6GB of memory, respectively), and for DP-Adam and Na\"ive DP-GaLore we use a physical batch size of 200 (which uses 83.9 and 83.7GB of memory, respectively).

The memory and timing experiments were conducted on a single H100 GPU.

To create the plot of singular values shown in \cref{fig:vit-svals}, we record the top $64$ (corresponding to the projection dimension we use for all Vision Transformer experiments) singular values for each layer during the first step of training with a batch size of 1000, with possible clipping and different noise levels $\sigma$ applied to the gradients prior to computing the SVD. 

\begin{table}[t]
\caption{\small Number of samples processed per second during training and total train time for Vision Transformer on CIFAR-10 (using 1 H100 GPU).}
\begin{center}
\begin{small}
\begin{tabular}{lcc}
\toprule
\textbf{Method} & \textbf{Throughput (Samples/s)} & \textbf{Total Training Time (hours)} \\
\midrule
Adam (non-private) & 379 & 2.2 \\
\midrule
DP-Adam & 219 & 3.8 \\
Na\"ive DP-GaLore & 217 & 3.8 \\
\ouralg{} & 273 & 3.1 \\
\bottomrule
\end{tabular}
\end{small}
\label{tab:vit-timing}
\end{center}
\end{table}

We also fine-tune a pretrained checkpoint of ViT-Base on CIFAR10 and CIFAR100 \footnote{Link to the checkpoint: \url{https://huggingface.co/google/vit-base-patch16-224}}. For these experiments, we again use a grid search over the training set for all three methods to select the clipping parameter $C$ from $\{0.1, 1, 10\}$ and the learning rate from $\{1\text{e}-5, 5\text{e}-5, 1\text{e}-4, 5\text{e}-4\}$. Using the best hyperparameters for each method, we then fine-tune on the entire training set at the different privacy level $\varepsilon=1,2,4,8$ and evaluate on the original testing set. \Cref{tab:vit-grid-search-hyperparameters-finetuning} lists the selected $C$ and learning rate for each method and dataset. For both the hyperparameter search and the final fine-tuning, we use a total batch size of 1000 and train for 20 epochs. \Cref{tab:vit-full-results-finetuning} lists the final results (best test accuracy) for each method and dataset at different privacy levels.

\vskip 0.2cm
{
\renewcommand{\arraystretch}{1.25}
\begin{table}[h!]
\caption{\small Clipping parameter $C$ and learning rate selected from grid search for each method and dataset for Vision Transformer fine-tuning.}
\vskip 0.05in
\centering
\begin{small}
\begin{tabular}{c|cc|cc}
\toprule
Method & \multicolumn{2}{c|}{$C$} & \multicolumn{2}{c}{Learning Rate} \\
\cmidrule(lr){2-3} \cmidrule(lr){4-5}
& CIFAR-10 & CIFAR-100 & CIFAR-10 & CIFAR-100 \\
\midrule
DP-Adam & 10.0 & 10.0 & 1\text{e}-4 & 5\text{e}-4 \\
Naïve DP-GaLore & 10.0 & 1.0 & 5\text{e}-4 & 5\text{e}-4 \\
\ouralg{} & 0.1 & 1.0 & 5\text{e}-4 & 5\text{e}-4 \\
\bottomrule
\end{tabular}
\end{small}
\label{tab:vit-grid-search-hyperparameters-finetuning}
\end{table}
}

\begin{table}
\caption{\small Vision Transformer fine-tuning results for CIFAR-10, and CIFAR100 at different privacy levels (best test accuracy during training).}
\begin{center}
\begin{small}
\begin{tabular}{lcc}
\toprule
\textbf{Task} & \textbf{CIFAR-10} & \textbf{CIFAR-100} \\
\midrule
DP-Adam ($\varepsilon=1$) & 97.2 & 49.7 \\
DP-Adam ($\varepsilon=2$) & 97.5 & 70.6 \\
DP-Adam ($\varepsilon=4$) & 98.1 & 77.3 \\
DP-Adam ($\varepsilon=8$) & 98.2 & 80.8 \\
\midrule
Na\"ive DP-GaLore ($\varepsilon=1$) & 96.4 & 65.6 \\
Na\"ive DP-GaLore ($\varepsilon=2$) & 97.0 & 78.8 \\
Na\"ive DP-GaLore ($\varepsilon=4$) & 97.5 & 83.8 \\
Na\"ive DP-GaLore ($\varepsilon=8$) & 97.7 & 85.5 \\
\midrule
\ouralg{} ($\varepsilon=1$) & 97.0 & 81.4 \\
\ouralg{} ($\varepsilon=2$) & 97.9 & 85.4 \\
\ouralg{} ($\varepsilon=4$) & 97.8 & 86.9 \\
\ouralg{} ($\varepsilon=8$) & 98.2 & 88.1 \\
\bottomrule
\end{tabular}
\end{small}
\label{tab:vit-full-results-finetuning}
\end{center}
\end{table}

\subsection{RoBERTa Fine-Tuning} \label{app:experimental-details-roberta}
We follow the same experimental setup and build off of the same codebase as used by \citet{zhang2024dpzero} and \citet{malladi2023fine} to fine-tune RoBERTa-Large \citep{liu2019roberta} on datasets that cover sentiment analysis (SST-2, SST-5), natural language inference (SNLI, MNLI, RTE), and topic classification (TREC). For all datasets, we use a few-shot setting with 512 samples per class, and 1000 total test samples. We first complete a grid search to find reasonable values for the projection dimension $r$, the projection update frequency $F$, the learning rate $\eta$, the DP clipping parameter $C$, and the total number of training steps $T$, for the SST-2 and MNLI datasets, evaluating on the development set for seed 100. Based on these experiments, we select $r=16$, $F=100$, $\eta=1\text{e}-4$, and $T=1000$. Using these values, we run a search for only the clipping parameter for the remaining datasets, selecting from $\{ 0.1, 0.5, 1.0, 5.0, 10.0, 20.0\}$ (again evaluating on the development set for seed 100). The grid search and clipping parameter search are done with $(\varepsilon=6,\delta=1\text{e}-5)$ privacy. The best $C$ value from this search is $0.5$ for SST-2, $20.0$ for SST-5, $0.1$ for SNLI, $10.0$ for MNLI, $0.5$ for RTE and $0.5$ for TREC. Using the best $C$ value, we run the final results for each dataset on the seeds 13, 21, and 42 (which contain different samplings of the full datasets), at both $(\varepsilon=2, \delta=1\text{e}-5)$ and $(\varepsilon=6, \delta=1\text{e}-5)$ privacy for each, and record the average final test accuracy over the 3 seeds for each dataset and privacy level. We train with a batch size of $64$ for all experiments, which may be achieved using gradient accumulation. The results for AdamW (non-private), DP-Adam, LoRA (non-private), DP-LoRA, MeZO (non-private), and DPZero come from \citet{zhang2024dpzero}.

For the memory experiment with results shown in in \cref{fig:roberta-opt-memory-usage}, we train for 30 steps and record the maximum memory reserved by PyTorch \citep{paszke2019pytorch} using the \texttt{torch.cuda.max\_memory\_reserved()} function, for a range of batch sizes. For all sizes, we use a gradient accumulation step for the first-order methods so that accumulated gradients are included in the memory accounting. We also repeat the memory experiment for \ouralg{} with varying subspace dimension $r$, with results shown in \cref{fig:roberta-memory-varied-r}. In \cref{fig:roberta-r-ablation}, we plot the test set accuracy of RoBERTa-Large on SST-2, SNLI, and TREC using \ouralg{} with varying $r$ from $[4,8,16,32,64,128,256]$ at $\varepsilon=2$ and $\varepsilon=6$ privacy levels. To obtain the results, we perform an initial search for the optimal clipping parameter $C$ from the choices $[0.1,0.5,1.0,5.0,20.0]$ for each $r$ and dataset, and use that value of $C$ to fine-tune the model on the dataset with three different random seeds, with the average final test accuracy reported.

\vskip 0.15cm
\begin{figure}[ht]
    \centering
    \includegraphics[width=0.6\linewidth]{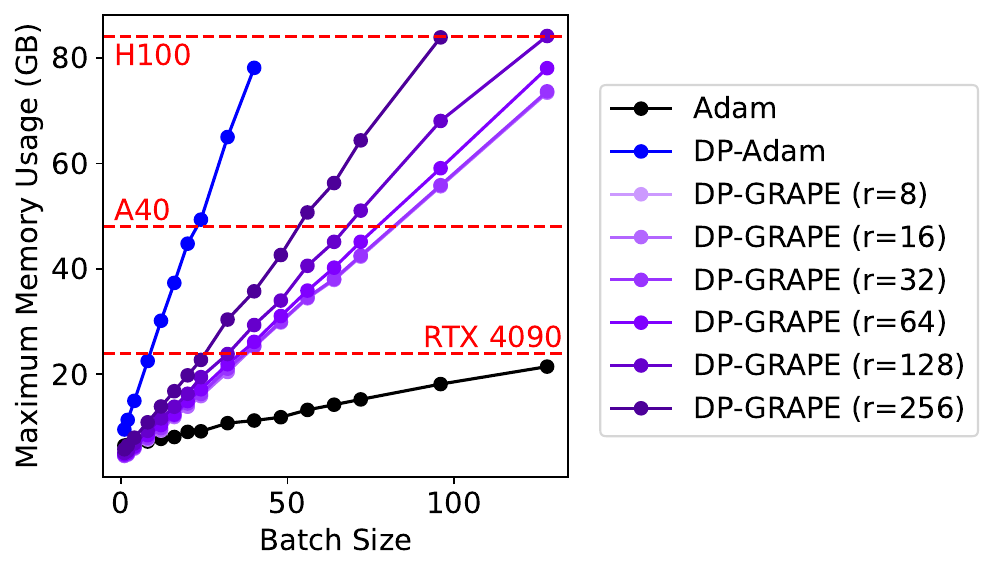}
    \caption{\small Maximum memory usage for fine-tuning RoBERTa-Large with \ouralg{} using different subspace dimensions $r$, with comparisons to Adam, DP-Adam, and DPZero.}
    \label{fig:roberta-memory-varied-r}
\end{figure}
\vskip 0.15cm

\begin{figure}
    \centering
    \begin{minipage}{0.325\linewidth}
        \centering
        \includegraphics[width=\linewidth]{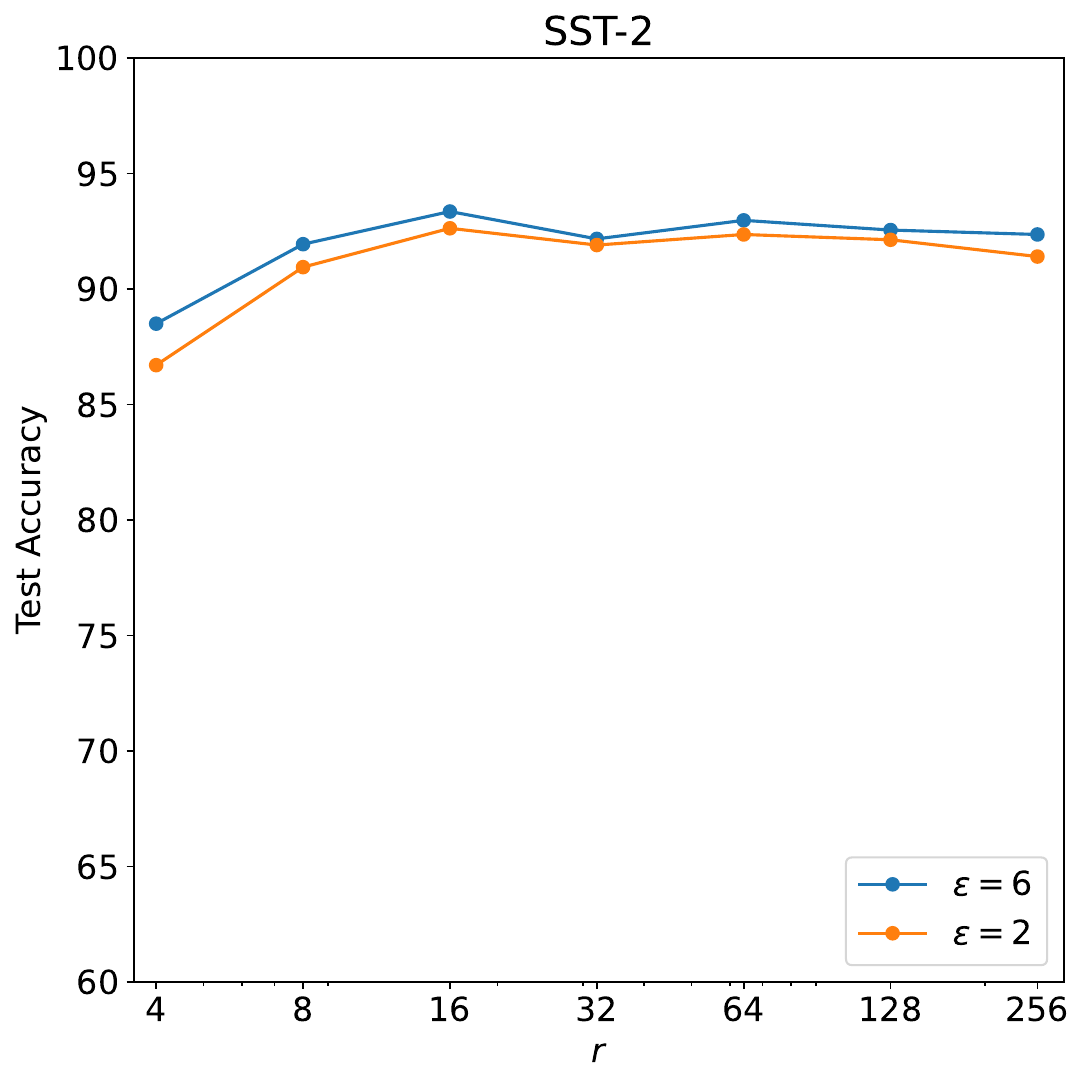}
    \end{minipage}
    \hfill
    \begin{minipage}{0.325\linewidth}
        \centering
        \includegraphics[width=\linewidth]{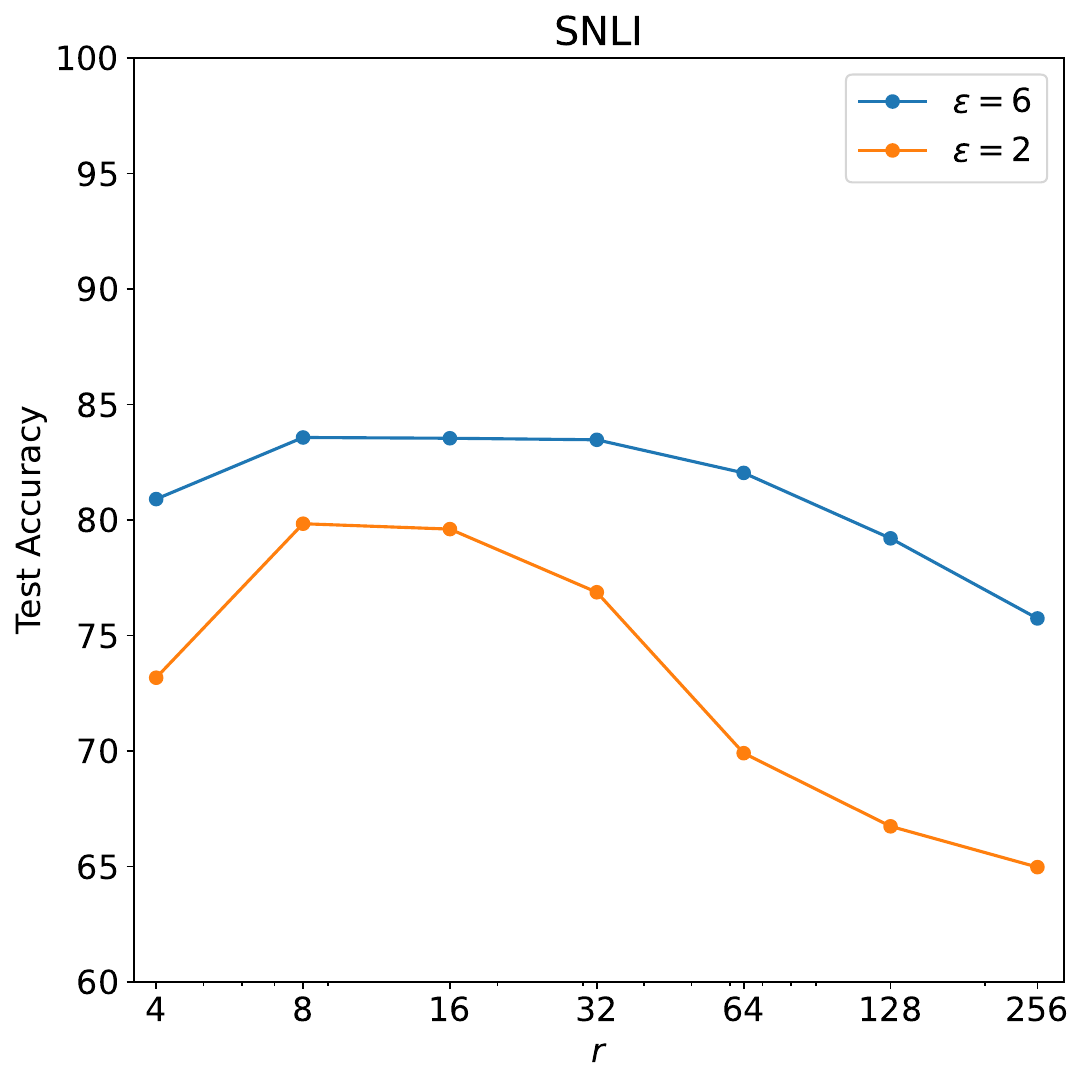}
    \end{minipage}
    \begin{minipage}{0.325\linewidth}
        \centering
        \includegraphics[width=\linewidth]{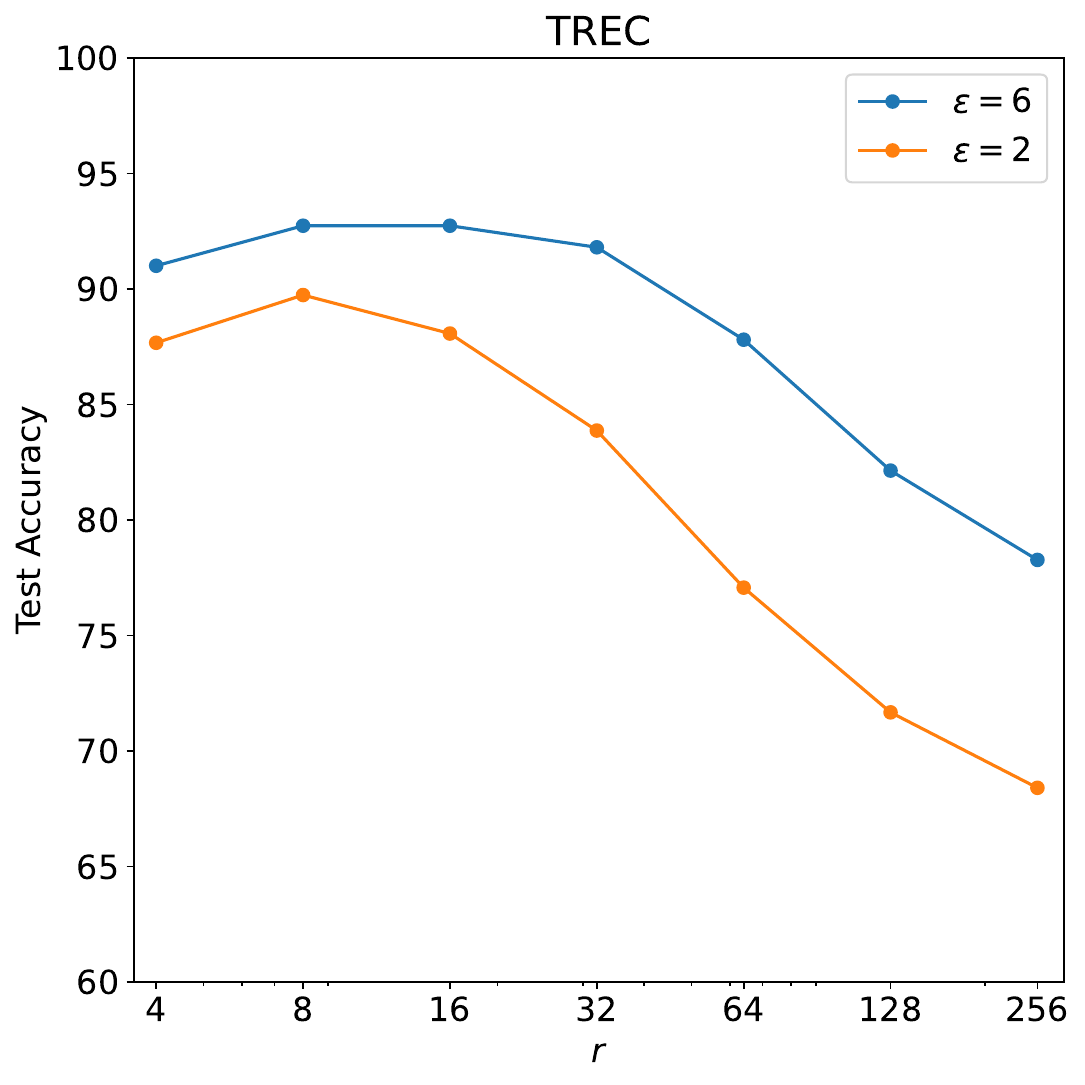}
    \end{minipage}
    \caption{Accuracy of RoBERTa-Large on SST-2, SNLI, and TREC using \ouralg{} with varying subspace dimension $r$ at $\varepsilon = 2$ $\varepsilon=6$ privacy levels.}
    \label{fig:roberta-r-ablation}
\end{figure}

To get the timing results shown in \cref{tab:roberta-timing}, we time how long the RoBERTa fine-tuning takes on SST-2 takes using an H100 GPU for DP-Adam, DPZero, and \ouralg{}, using the same experimental setup as we use to get the final results for different. The total batch size is set to 64. For DP-Adam, a batch size of 64 does not fit, so we use a physical batch size of 32 and gradient accumulation. For all methods, we fine-tune for 50 steps. The total train time for each method is inferred from the time for 50 steps, assuming 1000 total steps for DP-Adam and \ouralg{} and 10000 total steps for DPZero.

The memory and timing experiments were conducted on a single H100 GPU.

To generate the convergence plot shown in \cref{fig:roberta-convergence}, we fine-tune RoBERTa-Large on SST-2 using \ouralg{} and DPZero \citep{zhang2024dpzero}, and measure the development set accuracy every 50 steps. For \ouralg{}, we exactly match the experimental setup used to generate the results in \cref{tab:roberta-fewshot-results}. For DPZero, we use the same setup and implementation as given in the official GitHub implementation.  

\begin{table}[t]
\caption{\small Throughput and total training time for fine-tuning on SST-2 with RoBERTa-Large with a total batch size of 64 on an H100 GPU. Total training time is based on 1000 total steps for DP-Adam and \ouralg{} (the same total number of steps we use to generate the results in \cref{tab:roberta-fewshot-results}) and 10000 total steps for DPZero, (the number of steps reported to generate the final results in \citet{zhang2024dpzero}.}
\begin{center}
\begin{small}
\begin{tabular}{lp{2cm}p{2cm}}
\toprule
\textbf{Method} & \textbf{Throughput (Samples/s)}  & \textbf{Total Train Time (hours)} \\
\midrule
DP-Adam & 71.7 & 0.6 \\
DPZero & 268.1 & 1.7 \\
\ouralg{} & 75.9 & 0.6 \\
\bottomrule
\end{tabular}
\end{small}
\label{tab:roberta-timing}
\end{center}
\end{table}

\vskip 0.1in
\begin{figure}[ht]
    \centering
    \includegraphics[width=0.75\linewidth]{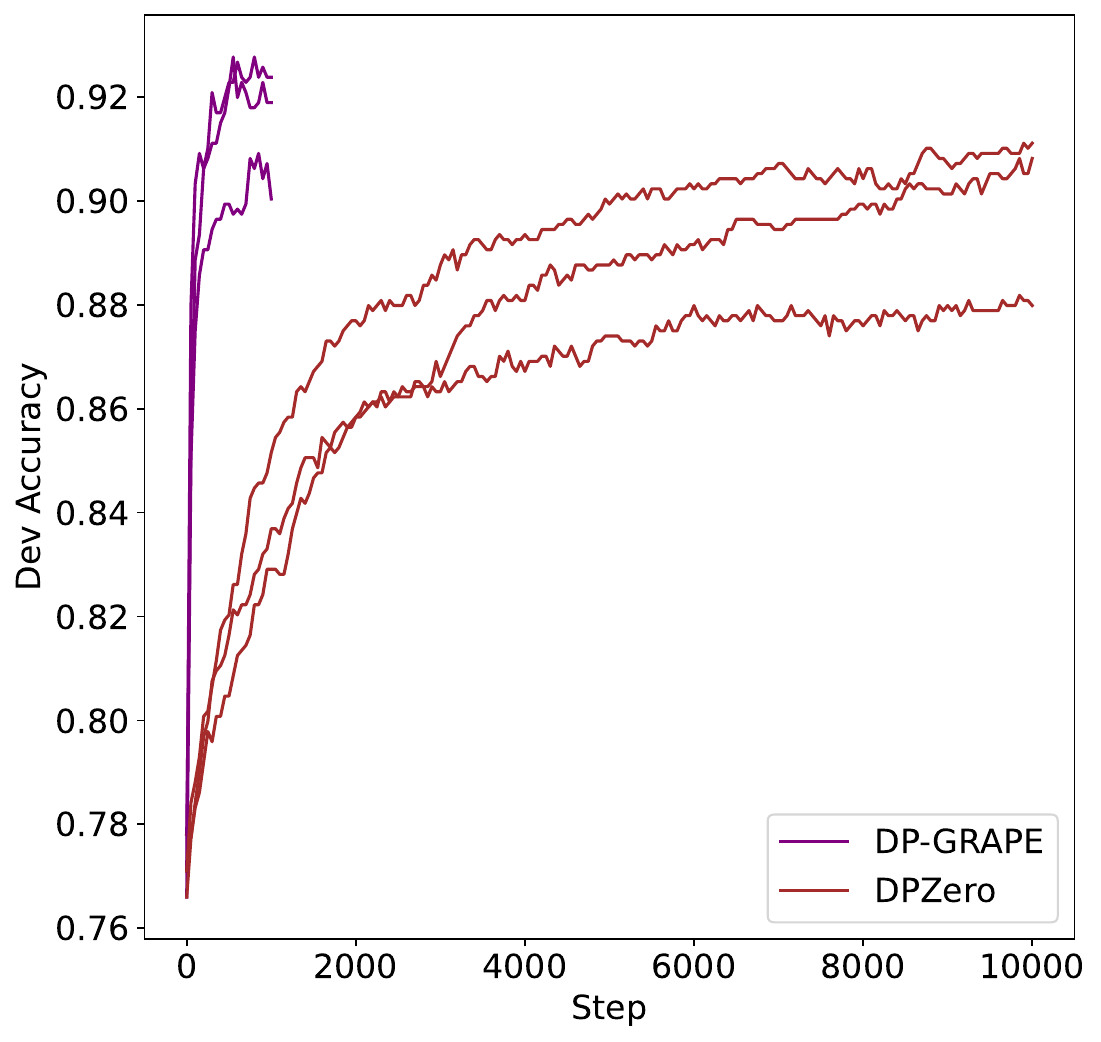}
    \caption{\small Convergence (as measured by development set accuracy) when fine-tuning RoBERTa-Large on SST-2 for \ouralg{} and DPZero, with runs for three different random seeds used to generate few-shot datasets shown.}
    \label{fig:roberta-convergence}
\end{figure}

\subsection{OPT Fine-Tuning} \label{app:experimental-details-opt}
For the OPT experiments, we also follow the same experimental setup and build off the same codebase as used by \citet{zhang2024dpzero} and \citet{malladi2023fine} to fine-tune OPT models with 1.3B, 2.7B, and 6.7B parameters on SST-2, BoolQ, and SQuAD, and DROP. We use a few-shot (1000 total training samples) setting for all datasets, and 1000 total samples for testing. Due to the increased computational requirements needing for fine-tuning as compared to the RoBERTa models, we search only for the best clipping parameter $C$ from the choices $\{ 0.1, 1.0, 5.0, 20.0\}$ for each model and dataset. \Cref{tab:opt-c} shows the $C$ we select for each model size and dataset for DP-Adam, and \ouralg{}. For \ouralg{}, we set $\eta=1\text{e}-4$, $F=100$, $T=2000$, and $r=16,32,64$ for the 1.3B, 2.7B, and 6.7B models, respectively. For DP-Adam, we set $\eta=1\text{e}-5$ and $T=3000$ after noting that a smaller learning rate and increased number of training steps is more stable. We train with a batch size of $8$ for all experiments, which may be achieved by gradient accumulation.

For the memory experiment, we train for 30 steps and record the maximum memory reserved by PyTorch \citep{paszke2019pytorch} using the \texttt{torch.cuda.max\_memory\_reserved()} function. For all sizes, we use a gradient accumulation step for the first-order methods so that accumulated gradients are included in the memory accounting.

The memory and timing experiments were conducted on a single H100 GPU.

For the timing experiment, we use the same experimental setup for each method as we use to generate the final results. We record the time taken to complete the first 30 steps of training, and use that to estimate the throughput (number of samples processed per second) and the total training time based on the total number of steps used to generate the final results listed in \cref{tab:opt-classification-results} and \cref{tab:opt-generation-results} (2000 steps for \ouralg{}, 3000 steps for DP-Adam, and 20,000 steps for DPZero, as listed in \citet{zhang2024dpzero}.
\vskip 0.3cm
{
\renewcommand{\arraystretch}{1.25}
\begin{table}[h!]
\caption{\small Clipping parameter $C$ selected for each 1.3B/2.7B/6.7B OPT models and different datasets, for DP-Adam and \ouralg{}. DP-Adam runs out of memory for the 6.7B models on all datasets.}
\vskip 0.05in
\centering
\begin{small}
\begin{tabular}{c|cccc}
\toprule
& \textbf{SST-2} & \textbf{BoolQ} & \textbf{SQuAD} & \textbf{DROP} \\
\midrule
DP-Adam & $20.0/20.0/-$ & $5.0/20.0/-$ & $5.0/5.0/-$ & $5.0/20.0/-$ \\
\ouralg{} & $20.0/20.0/1.0$ & $20.0/20.0/5.0$ & $0.1/0.1/0.1$ & $0.1/0.1/1.0$ \\
\bottomrule
\end{tabular}
\end{small}
\label{tab:opt-c}
\end{table}
}

\vskip 0.3in

\begin{table*}[!ht]
\caption{\small Throughput and total training time for fine-tuning on SQuAD with OPT models with a total batch size of 8 on an H100 GPU. Total training time is based on 2000 total steps for \ouralg{}, 3000 total steps for DP-Adam (both are the same total number of steps we use to generate the results in \cref{tab:opt-generation-results}) and 20000 total steps for DPZero, (the number of steps reported to generate the final results in \citet{zhang2024dpzero}. OOM indicates out of memory on an 80GB GPU with a batch size of 1 and gradient accumulation.}
\begin{center}
\begin{small}
\setlength{\tabcolsep}{4pt}
\begin{tabular}{lp{1.65cm}p{1.70cm}p{1.65cm}p{1.70cm}p{1.65cm}p{1.70cm}}
\toprule
\textbf{Model} & \multicolumn{2}{c}{OPT-1.3B} & \multicolumn{2}{c}{OPT-2.7B} & \multicolumn{2}{c}{OPT-6.7B} \\
 & \textbf{Throughput (Samples/s)}  & \textbf{Total Train Time (hours)} & \textbf{Throughput (Samples/s)}  & \textbf{Total Train Time (hours)} & \textbf{Throughput (Samples/s)}  & \textbf{Total Train Time (hours)} \\
\midrule
DPZero & 14.8 & 3.0 & 8.6 & 5.2 & 4.1 & 11.0 \\
DP-Adam & 9.6 & 0.7 & 4.4 & 1.5 & OOM & OOM \\
\ouralg{} & 10.7 & 0.4 & 5.6 & 0.8 & 2.5 & 1.8 \\
\bottomrule
\end{tabular}
\end{small}
\label{tab:opt-timing}
\end{center}
\end{table*}

\clearpage

\subsection{Hyperparameter Recommendations}
Although we performed somewhat extensive hyperparameter searches for our different experiments, in many cases, large searches may be computationally infeasible, so here we give general recommendations for selecting a learning rate, subspace dimension $r$, subspace change frequency $F$, and clipping parameter $C$ when training with \ouralg{}. For the learning rate, we found that a good range for fine-tuning with \ouralg{} was between 1e-4 and 1e-3 (slightly larger than DP-Adam). In general, based on our experiments and previous works which use gradient projection, practitioners should use a larger subspace dimension $r$ for pre-training than for fine-tuning, and they should use a larger $r$ for larger models. For example, for non-private GaLore \citep{zhao2024galore}, $r=128$ was used for pre-training a 60M Llama model, while $r=512$ was used for pre-training a 1B Llama model, and $r=4$ or $r=8$ was used for fine-tuning a RoBERTa-Base model [1]. We used a similar range for $r$ with respect to the different models sizes we applied \ouralg{} to. A subspace change frequency of $F=100$ worked well for all of the fine-tuning experiments. We recommend practitioners do a search over clipping values between roughly $C=0.1$ and $C=100$, as the best clipping parameter can vary for different tasks. 

\subsection{Computational Resources}  \label{app:experimental-details-compute}
We run all experiments on a single H100 GPU (although up to 4 were used at any one time to run separate experiments in parallel).

\section{Privacy and Convergence Analysis}

In this section, we provide a formal statement of \cref{th:informalconv} discussed in the main paper and its proof. Before the formal statement, we would like to give a more general version of the \ouralg{} algorithm discussed in the main paper. Instead of considering a partition of gradients in \ouralg{} which denotes each as gradients corresponding to a layer, we consider any general partition of the gradient vector in which we would independently project each part. This serves as a generalization as one can also introduce partitioning that is not necessarily demarcated by layers but may be something different (like the first column vector across layers). The generalized version of \ouralg{} is given by \cref{algo:d-free}.
\vskip 0.1in

\label{app:convergence}
\begin{algorithm}[h]
    \caption{Generalized Version of \ouralg}
    \begin{algorithmic}[1]
        \REQUIRE Dataset $X=\{\xi_1, \dots, \xi_n\}$, batch size $B$, number of blocks (layers) $L$ which is a partition $\indset_1, \indset_2, \cdots, \indset_L$ such that $\indset_1 \cup \indset_2 \cup \cdots \cup \indset_L = [d]$ $|\indset_\ell| = m_\ell n_\ell$ (hence $d = \sum_{\ell = 1}^Lm_\ell n_\ell$), initialization $w_0\in\bb R^d$, number of iterations $T$, stepsize $\eta>0$, clipping threshold $C>0$, privacy parameters $\eps>0, \delta\in(0,1)$.
        \STATE Compute privacy noise variance $\sigma = \frac{2C\sqrt{T\log(1/\delta)}}{n\epsilon}$
        \FOR{$t=0, 1, \cdots, T-1$}
            \STATE Sample B data-points $X^t = \{\xi^t_j\}_{j=1}^B$ uniformly from $X$.

            \FOR{$\ell = 1, \cdots, L$}
                \STATE For $i \in [r]$, sample $p^t_{\ell,i}$ i.i.d from  $\c N\paren{0, \frac{1}{r}I_{m_\ell}}$ and define $P^t_{\ell} = [p^t_{\ell,1} \cdots p^t_{\ell,r}] \in \bb R^{|\c U_\ell| \times r}$.
                \STATE For $j \in [B]$, compute projected gradient $R^t_{\ell,j} \gets {P^t_{\ell}}^\top (\nabla f(w_t; \xi_j^t) [\indset_\ell])$.
            \ENDFOR
            \STATE Define $R^t_j = [R^t_{1,j}, \cdots, R^t_{L,j}] \in \bb R^{r \times L}$ for each $j \in [B]$.
            \FOR{$\ell \in [L]$} 
                \STATE Privatize projected gradient of layer $\ell$, $\tilde{R}^t_\ell = \frac{1}{B}\sum^B_{j=1}\clip\left(R^t_{j}, {C}\right)[\cdot, \ell] + z^{t}_\ell,$ where $z^{t}_\ell\sim \cN(0, \sigma^2 \mI_r)$.
            \STATE Update parameters: $w_{t+1}[\indset_\ell] \;\gets\; w_t[\indset_\ell] - \eta P^t_\ell\tilde{R}^t_\ell$
            \ENDFOR
        \ENDFOR
        \STATE \textbf{Return} $w_\tau$ for $\tau$ sampled uniformly at random from $\{0,1,\cdots,T-1\}$.
    \end{algorithmic}
    \label{algo:d-free}
\end{algorithm} 
\vskip 0.1in
We now give a brief overview of how \cref{algo:d-free} works. We start with a fixed partition of the gradient vector, calling each partition a block. In \ouralg, each block corresponds to a layer. At each step, \cref{algo:d-free} samples a batch of $B$ data points and computes a block-wise random projection of gradients using $r$ Gaussian vectors. It then aggregates gradients across blocks, clips them, and adds Gaussian noise to ensure DP. Finally, the previously sampled $r$ vectors are used to map updates back to the $d$-dimensional space, and parameters are updated. This is repeated for $T$ rounds.

\cref{algo:d-free} is a more generalized version of \ouralg{}, where each $\c U_\ell$ for $\ell \in [L]$ represents indices in the flattened gradient vector corresponding to layer $\ell$. Thus, $\abs{\c U_\ell} = m_\ell n_\ell$. By replacing blocks with layer gradients, we see that projecting each block corresponds to projecting each layer’s gradient. However, \cref{algo:d-free} differs slightly from \ouralg{} by using $r$ different projections of the flattened layer-wise gradients instead of left/right projections of gradient matrices. The method in \ouralg{} has lower variance, leading to a similar upper bound on convergence. 

To facilitate our analysis, we make the following assumptions:

\begin{assumption}[Per-Sample Lipschitzness]
    The loss $f(\cdot;\xi)$ is $\lip$-Lipschitz for all $\xi \in X$, i.e. for all $w_1, w_2 \in \bb R^d$ $\norm{f(w_1;\xi) - f(w_2;\xi)} \leq \lip\norm{w_1 - w_2}$.
    \label{asp:lip}
\end{assumption}

\begin{assumption}[Smoothness]
    The average loss $F(w) := \frac{1}{n} \sum_{i=1}^n f(w;\xi_i)$ is $\smo$-smooth for every given dataset $X$ i.e. for all $w_1, w_2 \in \bb R^d$, $\norm{\nabla F(w_1) - \nabla F(w_2)} \leq \smo \norm{w_1 - w_2}$.
    \label{asp:smoothness}
\end{assumption}

\begin{assumption}[Finiteness of Optimal Value]
    $F^*:=\min_{w\in\bb R^d} F(w)$ is finite.
    \label{asp:finitemin}
\end{assumption}

\remark{It is important to note that these assumptions are standard in the analysis of private non-convex optimization \citep{luw24privnonconvex, zhang2024dpzero}.}

Before presenting the complete proof of our theorem, we would like to define the following notations for the ease of stating our proofs.

\textbf{Notations and Lemmas:} For a set $A \subseteq X$, we define $f(w;A) = \frac{1}{|A|}\sum_{\xi \in A} f(w; \xi)$. By this definition, $F(w) = f(w; X)$. Moreover, for a vector $a \in \bb R^d$ and an ordered set $P = \{p_1, p_2, \cdots, p_q\} \subseteq [d]$, we define $a[P] := (a[p_1], a[p_2], \cdots a[p_q]) \in \bb R^q$ where $a[i]$ represents the $i^{th}$ index of the vector $a$. We assume that indexing starts from 1. $\norm{\cdot}$ represents the $\ell_2$ norm while $\norm{\cdot}_F$ represents the Frobenius norm. We now state some lemmas which would be useful throughout the proof.

\begin{lemma}
\label{lem: posrand}
Consider any random variable $X \geq 0$ and an event $Q$, then we have that
\begin{align*}
    \bb E[X|Q] \leq \frac{\bb E [X]}{\bb P (Q)}
\end{align*}
\end{lemma}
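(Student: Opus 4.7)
The plan is to invoke the definition of conditional expectation together with the non-negativity hypothesis $X \geq 0$. Assuming $\mathbb{P}(Q) > 0$ (otherwise the conditional expectation is undefined and the claim is vacuous), I would start from the identity
\[
\mathbb{E}[X \mid Q] \;=\; \frac{\mathbb{E}[X \cdot \mathbf{1}_Q]}{\mathbb{P}(Q)},
\]
which is the standard definition of conditional expectation with respect to an event of positive probability.

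Next, I would exploit the monotonicity of expectation. Since $X \geq 0$ almost surely and $\mathbf{1}_Q \in \{0,1\}$, we have the pointwise inequality $0 \leq X \cdot \mathbf{1}_Q \leq X$. Taking expectations preserves the inequality, giving $\mathbb{E}[X \cdot \mathbf{1}_Q] \leq \mathbb{E}[X]$.

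Combining the two displays and dividing by the positive quantity $\mathbb{P}(Q)$ yields
\[
\mathbb{E}[X \mid Q] \;=\; \frac{\mathbb{E}[X \cdot \mathbf{1}_Q]}{\mathbb{P}(Q)} \;\leq\; \frac{\mathbb{E}[X]}{\mathbb{P}(Q)},
\]
which is the claim. There is essentially no obstacle here; the proof is a one-line consequence of the definition of conditional expectation together with the monotonicity of the Lebesgue integral applied to the trivial bound $X \mathbf{1}_Q \leq X$ for non-negative $X$. The only subtlety worth flagging is the degenerate case $\mathbb{P}(Q) = 0$, which I would dispatch in a single sentence as being vacuous.
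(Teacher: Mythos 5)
Your proof is correct and is essentially the same argument as the paper's: the paper writes $\mathbb{E}[X] = \mathbb{E}[X\mid Q]\,\mathbb{P}(Q) + \mathbb{E}[X\mid Q^c]\,\mathbb{P}(Q^c) \geq \mathbb{E}[X\mid Q]\,\mathbb{P}(Q)$ via the law of total probability and non-negativity, which is just the conditional-expectation/indicator identity you use, phrased differently. Your remark about the degenerate case $\mathbb{P}(Q)=0$ is a reasonable extra precaution but not a substantive difference.
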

\begin{proof}
    This directly follows from the law of total probability and the non-negativity of $X$
    \begin{align*}
        \bb E [X] = \bb E[X | Q] \bb P (Q) + \bb E[X | Q^c] \bb P (Q^c) \geq \bb E[X | Q] \bb P (Q)
    \end{align*}
    which proves the given claim.
\end{proof}

\begin{lemma}[\citet{greene2003econometric}]
\label{lem: truncgauss}
Consider $X \sim \c N(0, \sigma^2)$ then we have that
\begin{align*}
    \bb E[X^2 | X \geq K] = \sigma^2\left[1 + \frac{K}{\bb P[X \geq K]\sigma\sqrt{2\pi}} e^{-\frac{K^2}{2\sigma^2}}\right]
\end{align*}
\end{lemma}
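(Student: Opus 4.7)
The plan is to establish this classical truncated-Gaussian moment identity by a single integration by parts against the Gaussian density. First, by the definition of conditional expectation,
$$\bb E[X^2 \mid X \geq K] \;=\; \frac{1}{\bb P[X \geq K]}\int_K^\infty \frac{x^2}{\sigma\sqrt{2\pi}}\, e^{-x^2/(2\sigma^2)}\,dx,$$
so the task reduces to evaluating the numerator integral.

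The key observation is the identity $\tfrac{d}{dx}\bigl(-\sigma^2 e^{-x^2/(2\sigma^2)}\bigr) = x\,e^{-x^2/(2\sigma^2)}$. I would split $x^2 e^{-x^2/(2\sigma^2)} = x\cdot\bigl(x\,e^{-x^2/(2\sigma^2)}\bigr)$ and integrate by parts with $u=x$ and $dv = x\,e^{-x^2/(2\sigma^2)}\,dx$, so that $v = -\sigma^2 e^{-x^2/(2\sigma^2)}$. The boundary contribution at $+\infty$ vanishes by Gaussian decay; the boundary contribution at $x=K$ yields a term proportional to $K\,e^{-K^2/(2\sigma^2)}$; and the remaining integral $\sigma^2\int_K^\infty e^{-x^2/(2\sigma^2)}\,dx$ collapses to $\sigma^2\cdot\sigma\sqrt{2\pi}\cdot \bb P[X \geq K]$ once the Gaussian normalization is restored.

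Dividing the resulting expression by $\bb P[X \geq K]\cdot \sigma\sqrt{2\pi}$ produces the leading $\sigma^2$ term of the claimed identity (coming from the remaining integral) together with the correction term involving $e^{-K^2/(2\sigma^2)}/(\bb P[X \geq K]\sqrt{2\pi})$ (coming from the boundary contribution at $x=K$). No probabilistic subtlety is required beyond the definition of conditional expectation, and since this is cited as a standard textbook fact, the only care needed is to track constants precisely so that the $\sigma\sqrt{2\pi}$ from the Gaussian pdf combines cleanly with the $\sigma^2$ from the antiderivative. I do not anticipate any genuine obstacle: the whole argument is essentially a one-line computation, and my main concern would simply be verifying that the stated form of the correction term matches what the integration by parts produces (up to the factor of $K/\sigma$ that typically appears in the standard truncated-normal formula).
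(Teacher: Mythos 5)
Your method is the right one, and in fact it is the only argument available for comparison: the paper does not prove this lemma at all, citing it as a textbook fact from Greene. Carrying out your integration by parts, with $u=x$, $dv = x\,e^{-x^2/(2\sigma^2)}dx$, $v=-\sigma^2 e^{-x^2/(2\sigma^2)}$, the boundary term at $x=K$ is $\sigma^2 K e^{-K^2/(2\sigma^2)}$, so after dividing by $\sigma\sqrt{2\pi}\,\bb P[X\geq K]$ you obtain
\begin{equation*}
\bb E[X^2 \mid X\geq K] \;=\; \sigma^2\left[1 + \frac{K}{\sigma}\cdot\frac{1}{\bb P[X\geq K]\sqrt{2\pi}}\,e^{-\frac{K^2}{2\sigma^2}}\right],
\end{equation*}
i.e.\ exactly the standard truncated-normal identity with the factor $K/\sigma$ you were worried about. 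That concern is justified: the lemma as displayed in the paper omits this factor and is not correct as stated (a quick sanity check at $K=0$ gives $\bb E[X^2\mid X\geq 0]=\sigma^2$ by symmetry, whereas the displayed formula gives $\sigma^2(1+2/\sqrt{2\pi})$; likewise for large $K$ the conditional second moment grows like $K^2$, which only the version with the $K/\sigma$ factor reproduces). So your proof, done carefully, establishes the correct identity rather than the printed one; the discrepancy appears to be a typo in the paper, and it propagates to the place where the lemma is invoked (the bound there, applied with $\sigma=1$ and $K=C/(\sqrt{L}\lip)$, should carry an extra factor of $C/(\sqrt{L}\lip)$ in the correction term, although since that term is multiplied by $e^{-C^2/(2L\lip^2)}$ and $C$ is later chosen logarithmically large, the final rate is essentially unaffected up to constants and log factors).
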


\begin{lemma}
    \label{lem:condexptransfer}
    Consider any random variable $X$ and a set $Q$ such that $\{X: X \geq t\} \subset Q$, then we get that
    \begin{align*}
        \bb E[X| Q] \leq \bb E[X | X \geq t]
    \end{align*}
    \begin{proof}
        Using law of total probability, we get 
        \begin{align*}
            \bb E[X|Q] = \bb E[X|Q, X \geq t] \bb{P} (X \geq t | Q) + \bb E[X|Q, X < t] \bb{P} (X < t | Q).
        \end{align*}
        Since, $\{X: X \geq t\} \subset Q$, we have that $\bb E[X|Q, X \geq t] = \bb E[X|X \geq t]$. By conditioning, we have that $E[X|Q, X < t] < t \leq E[X| X \geq t]$ proving the given claim. 
    \end{proof}
\end{lemma}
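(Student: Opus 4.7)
The plan is to leverage the containment $\{X \geq t\} \subseteq Q$ to decompose the event $Q$ into a ``high'' piece and a ``low'' piece relative to the threshold $t$. Specifically, I would write $Q = \{X \geq t\} \sqcup (Q \cap \{X < t\})$ as a disjoint union, and then apply the law of total expectation to $\E[X \mid Q]$, expressing it as a convex combination of $\E[X \mid Q \cap \{X \geq t\}]$ and $\E[X \mid Q \cap \{X < t\}]$, weighted by $\mathbb{P}(X \geq t \mid Q)$ and $\mathbb{P}(X < t \mid Q)$ respectively (this is exactly the opening line of the author's short proof).

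Next I would simplify each conditional expectation. The first one collapses to $\E[X \mid X \geq t]$ because the containment forces $Q \cap \{X \geq t\} = \{X \geq t\}$, so the joint conditioning is redundant. For the second, the defining event $\{X < t\}$ gives the pointwise bound $X < t$, so $\E[X \mid Q \cap \{X < t\}] < t$. Finally, on $\{X \geq t\}$ we trivially have $X \geq t$, hence $\E[X \mid X \geq t] \geq t$. Chaining these two inequalities, the ``low'' conditional mean is dominated by the ``high'' conditional mean, so the convex combination is itself bounded above by $\E[X \mid X \geq t]$.

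I do not anticipate any substantive obstacle: the result is essentially a structural observation that conditioning on a strictly larger event can only dilute the mean downward by mixing in values below $t$. The only step requiring care is the use of the containment hypothesis to collapse $Q \cap \{X \geq t\}$ to $\{X \geq t\}$; without this, the first term would retain a nontrivial conditioning on $Q$ and the argument would not close cleanly. Everything else is a routine application of the law of total expectation together with the two elementary pointwise bounds.
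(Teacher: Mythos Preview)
Your proposal is correct and follows essentially the same route as the paper's proof: decompose $\E[X\mid Q]$ via the law of total expectation into the $\{X\geq t\}$ and $\{X<t\}$ pieces, use the containment to collapse the first conditional to $\E[X\mid X\geq t]$, and bound the second by $t\leq \E[X\mid X\geq t]$. The only difference is cosmetic---you spell out the disjoint-union decomposition of $Q$ explicitly before invoking total expectation, whereas the paper writes the conditional decomposition directly.
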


\begin{lemma}[\citet{lin2020gradient}]
\label{lem: lei}
Let $\{a_l\}_{l \in [n]}$ be an arbitrary collection of vectors such that $\sum_{l=1}^{n} a_l = 0$. Further, let $\mathcal{S}$ be a uniformly random subset of $[n]$ of size $m$. Then,\[
\mathbb{E}\left\|\frac{1}{m} \sum_{l \in \mathcal{S}} a_l \right\|^2 = \frac{n - m}{(n - 1) m} \frac{1}{n}\sum_{l=1}^{n} \|a_l\|^2 \leq \frac{{\ind}_{\{m < n\}}}{m~n}\sum_{l=1}^{n}\|a_l\|^2.
\]
\end{lemma}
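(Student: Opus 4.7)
The plan is to expand the squared norm as a double sum and use linearity of expectation over the random subset $\mathcal{S}$. Writing
\begin{align*}
\left\lVert \sum_{l \in \mathcal{S}} a_l \right\rVert^2 = \sum_{l \in \mathcal{S}} \lVert a_l \rVert^2 + \sum_{l \neq l' \in \mathcal{S}} \langle a_l, a_{l'} \rangle,
\end{align*}
I would use the fact that for a uniformly random size-$m$ subset of $[n]$, the marginals satisfy $\Pr[l \in \mathcal{S}] = m/n$ for each $l$, and $\Pr[l \in \mathcal{S}, l' \in \mathcal{S}] = m(m-1)/(n(n-1))$ for each $l \neq l'$. Taking expectations of both indicator-weighted sums yields
\begin{align*}
\mathbb{E}\left\lVert \sum_{l \in \mathcal{S}} a_l \right\rVert^2 = \frac{m}{n}\sum_{l=1}^n \lVert a_l \rVert^2 + \frac{m(m-1)}{n(n-1)} \sum_{l \neq l'} \langle a_l, a_{l'} \rangle.
\end{align*}

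Next I would invoke the zero-sum hypothesis. Since $\sum_{l=1}^n a_l = 0$, expanding $0 = \lVert \sum_l a_l \rVert^2$ gives $\sum_{l \neq l'} \langle a_l, a_{l'} \rangle = -\sum_l \lVert a_l \rVert^2$. Substituting and factoring produces
\begin{align*}
\mathbb{E}\left\lVert \sum_{l \in \mathcal{S}} a_l \right\rVert^2 = \frac{m}{n}\left(1 - \frac{m-1}{n-1}\right)\sum_{l=1}^n \lVert a_l \rVert^2 = \frac{m(n-m)}{n(n-1)} \sum_{l=1}^n \lVert a_l \rVert^2,
\end{align*}
and dividing by $m^2$ recovers the claimed equality $\frac{n-m}{(n-1)m}\cdot\frac{1}{n}\sum_l \lVert a_l \rVert^2$.

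For the inequality, I would split on whether $m < n$ or $m = n$. When $m = n$, the subset is deterministic and $\sum_{l \in \mathcal{S}} a_l = 0$, so both sides are zero and the indicator $\ind_{\{m<n\}}$ correctly collapses the right-hand side to $0$. When $m < n$ (and $n \geq 2$), one simply observes $\frac{n-m}{n-1} \leq 1$, hence $\frac{n-m}{(n-1)m} \leq \frac{1}{m}$, yielding the stated upper bound $\frac{1}{mn}\sum_l \lVert a_l \rVert^2$. There is no significant obstacle here; the only point requiring mild care is the edge case $m=n$, where the prefactor vanishes exactly, which is why the indicator appears in the stated bound.
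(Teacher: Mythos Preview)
Your argument is correct and is exactly the standard proof of this sampling-without-replacement variance identity. The paper does not actually prove this lemma; it simply quotes it from \citet{lin2020gradient}, so there is nothing further to compare.
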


\begin{lemma}[\citet{zhang2024dpzero}]
    Let $u, v$ be uniformly sampled from the standard $d$-dimensional Gaussian, let $a\in\bb R^d$ be some fixed vector independent of $u$, and $H\in\bb R^{d\times d}$ be some fixed matrix independent of $u$. We have that
    \begin{itemize}
        \item[$(i)$] $\bb E[u]=0\,$ and $\,\bb E[uu^\top]=\mI_d$.
        
        \item[$(ii)$] $\bb E_u[(u^\top a) u] = a\,$ and $\bb E_u[(u^\top a)^2 \norm{u}^2] = (d+2)\norm{a}^2$.

        \item[$(iii)$] $\bb E_u[u^\top H u] = \mathrm{Tr}(H)$.
    \end{itemize}
    \label{lm:sphere}
\end{lemma}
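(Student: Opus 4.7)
The plan is to verify each identity by elementary coordinate-wise calculation, exploiting the fact that for a standard $d$-dimensional isotropic Gaussian the coordinates $u_1,\ldots,u_d$ are i.i.d.\ with zero mean and unit variance, together with rotational invariance of the distribution to collapse the fourth-moment identity to a one-dimensional computation.

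For part (i), I would first write $\bb E[u]$ component-wise: its $i$-th entry is $\bb E[u_i]=0$ because each marginal is mean zero, which immediately gives $\bb E[u]=0$. For the second-moment matrix, the $(i,j)$-entry of $\bb E[uu^\top]$ equals $\bb E[u_i u_j]$, which is $1$ when $i=j$ (unit variance) and $0$ when $i\ne j$ (independence), hence $\bb E[uu^\top]=\mI_d$.

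For part (ii), the first identity is a clean consequence of part (i): since $a$ is fixed and independent of $u$, one can rewrite $(u^\top a)u=(uu^\top)a$ and pull $a$ outside the expectation, obtaining $\bb E_u[(u^\top a)u]=\bb E[uu^\top]\,a=\mI_d\,a=a$. For the second identity, I would use the rotational invariance of the distribution of $u$: choose an orthogonal $Q$ with $Qa = \|a\| e_1$; since $Qu$ has the same distribution as $u$ and $\|Qu\|=\|u\|$, one can assume $a=\|a\|e_1$ without loss of generality. The expectation then reduces to $\|a\|^2\,\bb E[u_1^2\,\|u\|^2]=\|a\|^2\bigl(\bb E[u_1^4]+\sum_{k\ne 1}\bb E[u_1^2 u_k^2]\bigr)$, which is a finite sum of second- and fourth-order scalar Gaussian (or spherical) moments that evaluate to constants depending only on $d$.

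The main obstacle is carefully book-keeping the index-coincidence cases in the fourth-moment term and invoking the correct normalization of the distribution so that the final coefficient matches the stated $d\|a\|^2$; the rotational-invariance reduction is the cleanest route, as it sidesteps the triple sum $\sum_{i,j,k}a_i a_j\bb E[u_iu_ju_k^2]$ and isolates a single one-dimensional moment calculation. Once the one-dimensional moments are substituted in, the identity follows by linearity, completing the lemma.
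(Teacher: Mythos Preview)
The paper does not actually prove this lemma; it is simply stated and attributed to \citet{zhang2024dpzero}, so there is no in-paper argument to compare against. Your coordinate-wise treatment of part~(i) and the first identity in part~(ii) is standard and correct, and the rotational-invariance reduction you propose for the fourth-moment identity is the right idea.

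There is, however, a genuine issue in the last step that your write-up glosses over with the phrase ``the correct normalization of the distribution so that the final coefficient matches the stated $d\|a\|^2$.'' If $u$ is a \emph{standard $d$-dimensional Gaussian} as written in the lemma, your reduction gives
\[
\|a\|^2\Bigl(\bb E[u_1^4]+\sum_{k\neq 1}\bb E[u_1^2u_k^2]\Bigr)=\|a\|^2\bigl(3+(d-1)\bigr)=(d+2)\|a\|^2,
\]
not $d\|a\|^2$. The value $d\|a\|^2$ is what one obtains when $u$ is uniform on the sphere of radius $\sqrt d$ (so that $\|u\|^2=d$ deterministically and $\bb E[uu^\top]=\mI_d$), which is the setting of the cited reference. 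In other words, the lemma as transcribed here conflates the Gaussian and spherical normalizations; your method is sound, but carrying it through honestly for the Gaussian case will produce $(d+2)\|a\|^2$, and you should flag that discrepancy rather than assume the constant will ``work out.'' For the downstream convergence argument the extra constant factor is harmless, but the identity as stated is not literally correct for $u\sim\mathcal N(0,\mI_d)$.
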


\begin{lemma}[Theorem 1 of \citet{abadi2016deep}]
    There exist constants $c_1$ and $c_2$ so that given the number of steps T, batch size B, and sensitivity $\Delta$, for any $\eps < c_1 \frac{B^2}{n^2}T$, the Gaussian Mechanism with noise level $\sigma$ applied for $T$ steps is $(\epsilon, \delta)$-differentially private for any $\delta > 0$ if we choose
    \begin{align*}
        \sigma \geq c_2\frac{B\Delta\sqrt{T\log(1/\delta)}}{n\epsilon}.
    \end{align*}
    \label{lm:subsampledmomentsaccountant}
\end{lemma}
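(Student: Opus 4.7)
The plan is to prove this via the moments accountant framework of Abadi et al., which tracks higher moments of the privacy loss random variable rather than the composition of $(\epsilon,\delta)$-DP guarantees directly. First I would fix neighboring datasets $X, X'$ and a sequence of auxiliary inputs (previous outputs), and define the privacy loss at outcome $o$ of mechanism $M$ as $c(o; M, \mathrm{aux}, X, X') = \log \frac{\Pr[M(\mathrm{aux}, X)=o]}{\Pr[M(\mathrm{aux}, X')=o]}$. Then I would define the log-MGF of the privacy loss $\alpha_M(\lambda) := \sup_{\mathrm{aux}, X, X'} \log \mathbb{E}_{o \sim M(\mathrm{aux}, X)}\bigl[\exp(\lambda c(o;M,\mathrm{aux},X,X'))\bigr]$.

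Next I would establish the two structural properties of $\alpha$ that drive the argument. The first is \emph{composition}: for an adaptive sequence $M_1, \dots, M_T$, one has $\alpha_{M_{1:T}}(\lambda) \leq \sum_{t=1}^T \alpha_{M_t}(\lambda)$, which follows from the tower property applied to the product of likelihood ratios. The second is \emph{tail conversion}: by Markov's inequality applied to $\exp(\lambda \cdot c)$, for any $\lambda > 0$ the mechanism is $(\epsilon, \delta)$-DP with $\delta = \exp(\alpha_M(\lambda) - \lambda \epsilon)$, so it suffices to find $\lambda$ such that $\alpha_M(\lambda) \leq \lambda \epsilon - \log(1/\delta)$.

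The main obstacle is the per-step bound on $\alpha$ for the subsampled Gaussian mechanism with Poisson sampling rate $q = B/n$, sensitivity $\Delta$, and noise $\mathcal{N}(0,\sigma^2\Delta^2 I)$. Writing the output distribution as a mixture $\mu_0 = (1-q)\mathcal{N}(0,\sigma^2) + q\mathcal{N}(1,\sigma^2)$ versus $\mu_1 = \mathcal{N}(0,\sigma^2)$ (in one dimension after reduction), I would bound $\mathbb{E}_{\mu_0}[(\mu_0/\mu_1)^\lambda]$ and the symmetric quantity by expanding the mixture in a binomial-type series in $q$ and carefully handling the two low-order terms. The key estimate, valid for integer $\lambda \leq \sigma^2 \log(1/(q\sigma))$ and small enough $q\sigma$, is $\alpha(\lambda) \leq \frac{q^2 \lambda(\lambda+1)}{(1-q)\sigma^2} + O(q^3 \lambda^3 / \sigma^3)$, so that $\alpha(\lambda) \lesssim q^2 \lambda^2/\sigma^2$ in the regime of interest. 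This is where the constants $c_1, c_2$ get determined.

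Finally I would combine the pieces. By composition, $\alpha_{\mathrm{total}}(\lambda) \leq T \cdot c_2' q^2 \lambda^2 / \sigma^2$. Setting $\lambda = \epsilon \sigma^2 / (2 c_2' T q^2)$ (subject to the validity window $\lambda \leq \sigma^2 \log(1/(q\sigma))$, which translates to the hypothesis $\epsilon < c_1 B^2 T/n^2$) and applying the tail conversion gives $\delta \leq \exp(-\epsilon^2 \sigma^2/(4 c_2' T q^2))$, i.e.\ $\epsilon^2 \geq 4 c_2' T q^2 \log(1/\delta)/\sigma^2$. Solving for $\sigma$ and substituting $q = B/n$ and the sensitivity $\Delta$ yields $\sigma \geq c_2 B \Delta \sqrt{T \log(1/\delta)}/(n\epsilon)$, as claimed. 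I expect the delicate part of the proof to be the per-step $\alpha(\lambda)$ bound, since the naive term-by-term expansion of $(\mu_0/\mu_1)^\lambda$ has cancellations that must be exploited to obtain the $q^2$ (rather than $q$) leading behavior; the composition and tail conversion are straightforward once $\alpha$ is controlled.
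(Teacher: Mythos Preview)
The paper does not prove this lemma at all: it is stated as a direct citation of Theorem~1 in \citet{abadi2016deep} and is invoked as a black box in the privacy analysis. Your proposal is a faithful outline of the moments accountant argument from that original reference (privacy-loss log-MGF, composability, tail-to-$(\epsilon,\delta)$ conversion, and the $O(q^2\lambda^2/\sigma^2)$ per-step bound for the subsampled Gaussian), so it is correct and matches the approach the paper defers to.
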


\paragraph{Privacy-Utility Tradeoff:} Now, we state the main convergence result:

\begin{theorem}[Formal version of \cref{th:informalconv}]
    \label{th:formalconv}
    For any $\eps>0$ and $\delta\in(0,1)$, Algorithm \ref{algo:d-free} is $(\eps, \delta)$-DP. Under Assumptions \ref{asp:lip}, \ref{asp:smoothness}, \ref{asp:finitemin}, $r \leq d - 1$ and the fact that $\max_{0\leq t\leq T}\abs{F(w_t) - F^*}\leq D$, there exist a set of parameters such that the output $\hat{w}$ satisfies 
    \begin{align*}
        \bb E\norm{\nabla F(\hat{w})}^2 \leq \left(\smo D + \paren{2 + \mathcal M + 2\log\paren{\left(10 + \frac{64dD\smo}{r\lip^2}\right) \frac{n^3 \eps^2}{2\log(1/\delta)}}}\lip^2\right)\frac{4\sqrt{2d\log(1/\delta)}}{n\eps},
    \end{align*}
    where $\mathcal M = \tilde{O}(1)$.
\end{theorem}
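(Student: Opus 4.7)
The projection matrices $P^t_\ell$ are drawn independently of the dataset, so they carry no private information. After projection the per-sample vector $R^t_j$ is clipped to $\ell_2$-norm at most $C$, hence replacing a single sample perturbs $\sum_j \operatorname{clip}(R^t_j,C)$ by at most $C$ in $\ell_2$ norm. Plugging this sensitivity together with subsampling rate $B/n$ over $T$ rounds into \cref{lm:subsampledmomentsaccountant}, the choice $\sigma = 2C\sqrt{T\log(1/\delta)}/(n\epsilon)$ immediately yields $(\epsilon,\delta)$-DP.

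\textbf{Utility decomposition.} My plan is to apply the smoothness descent lemma to $F(w_{t+1})-F(w_t) \le \langle \nabla F(w_t), u_t\rangle + \tfrac{\lambda}{2}\|u_t\|^2$ where $u_t$ concatenates $-\eta P^t_\ell \tilde R^t[\mathcal U_\ell]$ across blocks. The key identity is $\mathbb E[P^t_\ell (P^t_\ell)^\top] = I_{m_\ell}$, since each of the $r$ columns of $P^t_\ell$ is $\mathcal N(0,\tfrac{1}{r}I_{m_\ell})$. Hence when clipping is a no-op and we average over the mean-zero DP noise, the update is unbiased and the first-order term contributes $-\eta\|\nabla F(w_t)\|^2$, as in vanilla SGD. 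For the second-order term I would separately bound (i) the mini-batch variance via \cref{lem: lei}, (ii) the random-projection variance via fourth-moment identities of Gaussian matrices in the spirit of \cref{lm:sphere}, and (iii) the DP-noise contribution $\eta^2\sigma^2 \sum_\ell \mathbb E[\|P^t_\ell\|_F^2] = \eta^2\sigma^2 \sum_\ell m_\ell$. Since $\sum_\ell m_\ell n_\ell = d$ and $m_\ell\le n_\ell$, Cauchy--Schwarz gives $\sum_\ell m_\ell \le \sqrt{Ld}$, which is the origin of the $\sqrt{Ld}$ factor in the stated rate.

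\textbf{Clipping bias.} The main obstacle is controlling the bias introduced by clipping, because Gaussian projections produce $\|R^t_j\|$ with unbounded support. For each $j$ the norm $\|R^t_j\|^2 = \sum_\ell \|(P^t_\ell)^\top g_{\ell,j}\|^2$ is a sum of $\chi^2$-type variables with mean at most $\Gamma^2$ by \cref{asp:lip}. I would split on the event $\mathcal E_j = \{\|R^t_j\| \le C\}$: on $\mathcal E_j$ clipping is a no-op, and on $\mathcal E_j^c$ I would use \cref{lem: posrand} and \cref{lem:condexptransfer} to transfer the conditional expectation to a truncated Gaussian tail, then apply \cref{lem: truncgauss} to obtain closed-form bounds on the second moment of that tail; the result decays sub-exponentially in $(C/\Gamma)^2$. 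Choosing $C$ of order $\Gamma\sqrt{\log(\cdot)}$ absorbs this bias into the logarithmic factor appearing in the final bound rather than polluting the leading-order term.

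\textbf{Assembling.} Telescoping the per-step descent inequality from $t=0$ to $T-1$, using $F(w_0)-F^*\le D$, and averaging over the uniform draw of $\tau$, I obtain a bound of the form $\mathbb E\|\nabla F(w_\tau)\|^2 \lesssim D/(\eta T) + \lambda\eta\cdot(\text{variance}) + (\text{clipping bias})$. The three contributions are balanced by the stated choices of $\eta$, $B$, $C$, and $T = 2\sqrt{2d}n\epsilon/(r\sqrt{L\log(1/\delta)})$, which matches the noise level dictated by the privacy constraint and delivers the target rate $\tilde{\mathcal O}(\sqrt{Ld\log(1/\delta)}/(n\epsilon))$. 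The hardest step is unquestionably the tail analysis of the Gaussian projection norm: unlike the spherical sampling of \citet{zhang2024dpzero} one cannot bound $\|R^t_j\|$ uniformly, and the $\chi^2$ tail must be exploited delicately so that the second moment of the clipped Gaussian projection does not incur a factor of $d$ (rather than $\sqrt{Ld}$) and swamp the leading-order descent.
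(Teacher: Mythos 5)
Your overall route is essentially the paper's: privacy from the per-sample sensitivity of the clipped projected gradients plus the subsampled accountant of \cref{lm:subsampledmomentsaccountant}; utility from the smoothness descent lemma with unbiasedness of the Gaussian projections, minibatch variance via \cref{lem: lei}, a ``no-clipping'' event whose complement is controlled by truncated-Gaussian moment bounds (\cref{lem: posrand}, \cref{lem: truncgauss}), and a final balancing of $\eta, T, C, B$. The structure is sound, but there is a concrete error in your accounting of the DP-noise term, and it propagates into your explanation of where the rate comes from. The injected noise is not one $r$-vector per layer: in \ouralg{} it is an $r\times n_\ell$ matrix per layer, and in Algorithm \ref{algo:d-free} an $r$-vector per block of $|\indset_\ell| = m_\ell n_\ell$ flattened coordinates. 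After lifting by $P_\ell^t$ its expected energy is therefore $\sigma^2\sum_\ell m_\ell n_\ell = \sigma^2 d$ (this is exactly the paper's term $\encircled{3}$, $\bb E[z_t^\top P_t^\top P_t z_t] = \sigma^2 d$), not $\eta^2\sigma^2\sum_\ell m_\ell$. Hence your claim that $\sum_\ell m_\ell \le \sqrt{Ld}$ by Cauchy--Schwarz ``is the origin of the $\sqrt{Ld}$ factor'' is wrong: that quantity never enters the bound.

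In the correct accounting the noise contribution is $\propto \smo\,\eta^2 T\,\sigma^2 d \propto \smo\,\eta T\, d\, C^2\log(1/\delta)/(n^2\eps^2)$, and the $\sqrt{L}$ in the final rate enters through the clipping threshold, which must be taken $C = \Theta\paren{\lip\sqrt{L\log(\cdot)}}$ — not your $\lip\sqrt{\log(\cdot)}$ — because the no-clipping event is built from per-block, per-projection conditions $\abs{V^t_{\ell,i,j}}\le C/(\sqrt{L}\lip)$ that must jointly imply $\norm{R^t_j}\le C$ across all $L$ blocks and survive a union bound over $TLnr$ events; the $\sqrt{d}$ then comes from choosing $\eta \le r/(4d\smo)$ to absorb the $d/r$ inflation of the projected second moment (the Gaussian fourth-moment bound, your item (ii)) together with $\eta T \propto n\eps/(\smo\sqrt{Ld\log(1/\delta)})$. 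With your smaller $C$ and your underestimated noise energy, the balancing you describe would not reproduce the stated bound; fixing it forces exactly the paper's parameter choices. A smaller point: the conditioning on ``no clipping'' depends on all projection matrices and batches up to time $t$, so it destroys the exact unbiasedness you invoke for the first-order term; the paper repairs this via the law of total probability, splitting off the $\bar Q_t$ contribution — your per-sample split gestures at this, but that correction term is where \cref{lem:condexptransfer} and \cref{lem: truncgauss} are actually needed.
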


\begin{proof}

\textbf{Privacy Guarantee.} Assuming that the adversary has the knowledge of the projection matrices, the proof of privacy follows by the fact that the sensitivity when one data point is replaced is given by $\Delta = \frac{2C}{B}$ (Lines 8 and 10 in Algorithm~\ref{algo:d-free}). Then we use \cref{lm:subsampledmomentsaccountant} with a sampling factor of $\frac{B}{n}$ and explicitly get the constants from \citet{bassily2019private} to get the value of $\sigma = \frac{2C\sqrt{T\log(1/\delta)}}{n\epsilon}$ in the algorithm for $\eps \leq \frac{2B^2T}{n^2}$. Since, we assumed the matrices to be provided to the adversary, the projection of the noisy projected gradients back to the parameter space can be treated as a post-processing and hence DP is preserved due to post processing.

\begin{remark}
    Note that in the above definition, we use the replace-one notion of sensitivity. But, the sensitivity for add-one/remove-one notion of DP still remains the same as the  that we have mentioned above. For completion, we provide a short proof. Consider the per-sample vectors to be $v_1, \cdots, v_B$. Then, the sensitivity of the add-one/remove-one notion of DP would be $\Delta = \left|\frac{1}{n+1} \sum_{i=1}^{n+1} v_i-\frac{1}{n} \sum_{i=1}^n v_i\right|=\left|\frac{v_{n+1}}{n+1}-\frac{1}{n(n+1)} \sum_{j=1}^n v_j\right| \leq \frac{2C}{n}$. Thus, we get that the sensitivity upper bound for the replace-one and add/remove one notion of DP is exactly the same.
\end{remark}

\begin{remark}
    Note that we are essentially using a sub-sampled Gaussian mechanism in our algorithm which also satisfies other notions of DP such as Truncated Concentrated DP \citep{bun2018composable} and Gaussian DP \citep{dong2022gaussian}. Hence, we can convert the bounds in terms of the parameters for different notions of privacy by getting the variance of the Gaussian noise in those parameters.
\end{remark}

\paragraph{Utility guarantee.}
We focus on the utility guarantee on $\bb E\norm{\nabla F(w_\tau)}^2$. Before going into the details of the proof, we would like to define some notations which would make things easier for us in the proof. For $A \subseteq X$ Define $\nabla f(w;A) [U_\ell] = \nabla_\ell f(w;A)$ and w.l.o.g. take $\nabla f(w, A) = (\nabla_1 f(w;A)^T, \nabla_2 f(w;A)^T, \cdots, \nabla_L f(w;A)^T)^T \in \bb R^{d}$. The same can be extended for $F(w) = f(w;X)$.

For any $t \in \{0, \cdots, T-1\}, i \in [r], j \in [n]$, and $\ell \in [L]$ consider the term $\paren{\paren{p^t_{\ell, i}}^\top \nabla_\ell f(w_t;\xi_j)}^2$. Since $p^t_{\ell, i} \sim \c N\paren{0, \frac{1}{r} I_{m_\ell}}$, 
we have that $\paren{\paren{p^t_{\ell, i}}^\top \nabla_\ell f(w_t;\xi_j)}^2 = \frac{\norm{\nabla_\ell f(w_t;\xi_j)}^2}{r} \paren{V^t_{\ell,i,j}}^2$ 
where $V^t_{\ell,i,j} \sim \c N(0, 1)$. Let $Q^t_{\ell,i,j}$ be the event such that $\abs{V^t_{\ell,i,j}} \leq \frac{C}{\lip}$. Let $Q_t = \bigcap_{i=1}^r\bigcap_{j=1}^n\bigcap_{\ell = 1}^L Q^t_{\ell,i,j}$. Thus, the probability that clipping does not happen at one iteration is greater than $\bb P(Q_t)$.
We also denote $Q = \bigcap_{t=0}^{T-1} Q_t$. Hence, $\bar Q = \bigcup_{t=0}^{T-1}\bigcup_{i=1}^r\bigcup_{j=1}^n\bigcup_{\ell = 1}^L \Bar{Q}^t_{\ell,i,j}$ and for some $V \sim \c N(0, 1)$ we have that $\bb P\left[\abs{V^t_{\ell,i,j}} \geq \frac{C}{\lip}\right] = \bb P\left[\abs{V} \geq \frac{C}{\lip}\right]$ for some $V \sim \c N(0, 1)$. By the union bound, we have that
\begin{align*}
    \bb P(\bar Q) \leq TLnr\bb P\left[\abs{V} \geq \frac{C}{\lip} \right].
\end{align*}

To simplify the notation, we let $G(x_t)$ represent 
\begin{align*}
    G_\ell(w_t)
    & =
    \frac{1}{n} \sum_{j=1}^n \sum_{i=1}^r  p^t_{\ell,i} \paren{p^t_{\ell,i}}^\top  \nabla_\ell f(w_t;\xi_j) =
    \sum_{i=1}^r p^t_{\ell,i} \paren{p^t_{\ell,i}}^\top \nabla_\ell F(w_t),
\end{align*}

For all $\ell \in [L]$, let $\hat{G}_\ell(w_t; X_t) = \frac{1}{B} P^t_\ell\paren{\sum_{j=1}^B  \clip\left(R^t_{j}, {C}\right)[\cdot, \ell]}$ and let
\begin{align*}
    G_\ell(w_t; X_t)
    & =
    \frac{1}{B} \sum_{j=1}^B\sum_{i=1}^r p^t_{\ell,i} \paren{p^t_{\ell,i}}^\top \nabla_\ell f(w_t;x^t_j).
\end{align*}

Note that the definition of $G_\ell$ is agnostic to whether clipping happens of not. But, conditioned on the event that clipping does not happen ($Q$), we have that $\hat{G}_\ell(w_t; X_t) = G_\ell(w_t; X_t)$.

Algorithm \ref{algo:d-free} becomes $X_t \sim Unif(X)$, $w_{t+1}[U_\ell] = w_t[U_\ell] - \eta (\hat{G}_\ell(w_t; X_t) + P^t_\ell z^{t}_\ell)$ under the above notation. Let $\Hat{G}(w_t; X_t) = (\Hat{G}_1(w_t; X_t)^\top, \cdots, \Hat{G}_L(w_t; X_t)^\top)^\top$, $P_t = \left\{P^t_1, \cdots, P^t_L\right\}$, and $z_t = \left\{z^t_1, \cdots, z^t_L\right\}$. Define $N_t = \left[\paren{P^t_1z^t_1}^\top, \cdots, \paren{P^t_Lz^t_L}^\top\right]^\top$
. Then, by smoothness of $F(w)$, we get that
\begin{align*}
    F(w_{t+1}) &\leq F(w_t) - \eta \nabla F(w_t)^\top(\hat{G}(w_t;X_t) + N_t) + \frac{\eta^2\smo}{2}\norm{\hat{G}(w_t;X_t) + N_t}^2\\
    & = F(w_t) - \eta \sum_{\ell = 1}^L \nabla_\ell F(w_t)^\top \paren{\hat{G}_\ell(w_t;X_t) + P^t_\ell z^t_\ell} + \frac{\eta^2\smo}{2} \sum_{\ell = 1}^L\norm{\hat{G}_\ell(w_t;X_t) + P^t_\ell z^t_\ell}^2.
\end{align*}

The event $Q_t$ depends on the randomness in $P_{<(t+1)}:=\{P_0, \cdots, P_t\}$, $X_{< (t+1)} = \{X^0, \cdots, X^t\}$ and $z_{<t}:=\{z_0, z_1, \cdots, z_{t-1}\}$. Note that the noise $z_t$ sampled from $\c N(0, \sigma^2I_r)$ is independent of $P_{<(t+1)}$, $X_{<(t+1)}$, $z_{<t}$, $w_t$, and the dataset $X$. Given that event $Q_t$ happens, it implies that we do not clip which implies that $\hat{G}(w_t; X_t) = G(w_t; X_t)$. Combining this with the fact that we clip per-sample, we get $\bb E_{X_{< (t+1)}}[\hat{G}(w_t; X_t)|Q_t] = \bb E_{X_{< (t+1)}}[G(w_t; X_t)|Q_t] = \bb E_{X_{< t}}[G(w_t)|Q_t]$. Conditioned on the event $Q_t$ and taking expectation with respect to $z_{<(t+1)}$, $X_{<(t+1)}$ and $P_{<(t+1)}$, we have that
\begin{equation}
    \begin{split}
        \bb E_{z_{<(t+1)}, X_{<(t+1)}, P_{<(t+1)}}[F(w_{t+1}) | Q_t]
        &\leq
        \bb E_{z_{<(t+1)}, X_{<(t+1)}, P_{<(t+1)}} [F(w_{t}) | Q_t] -\\ 
        & \eta\bb E_{z_{<(t+1)}, X_{<(t+1)}, P_{<(t+1)}}\left[\sum_{\ell = 1}^L \nabla_\ell F(w_t)^\top \paren{\hat{G}_\ell(w_t;X_t) + P^t_\ell z^t_\ell} \middle| Q_t \right] \\
        &+
        \frac{\eta^2\smo}{2} \bb E_{z_{<(t+1)}, X_{<(t+1)}, P_{<(t+1)}} \left[\sum_{\ell = 1}^L\norm{\hat{G}_\ell(w_t;X_t) + P^t_\ell z^t_\ell}^2\middle| Q_t\right]\\
        &= \bb E_{z_{<t}, X_{<t}, P_{<t}} [F_S(x_t) | Q_t] \\
        &- \eta\underbrace{\sum_{\ell = 1}^L \bb E_{z_{<t}, X_{<t}, P_{<(t+1)}} \left[\nabla_\ell F(w_t)^\top G_\ell(w_t) \,\middle|\, Q_t \right]}_{\encircled{1}}\\
        &+ \frac{\eta^2\smo}{2}\underbrace{\sum_{\ell = 1}^{L}\bb E_{z_{<t}, X_{<(t+1)}, P_{<(t+1)}}\left[\norm{G_\ell(w_t; X_t)}^2\middle| Q_t\right]}_{\encircled{2}} \\
        &+ \frac{\eta^2\smo}{2} \underbrace{\sum_{\ell = 1}^L\bb E_{z_{<(t+1)}, X_{<t}, P_{<(t+1)}}\left[{z^t_\ell}^\top {P_\ell^t}^\top P_\ell^tz^t_\ell\middle| Q_t\right]}_{\encircled{3}}.
    \end{split}
    \label{eq:main-smooth}
\end{equation}

For term $\encircled{1}$, we get that
\begin{equation*}
    \encircled{1} =
    \sum_{\ell = 1}^L \bb E_{z_{<t}, X_{<t}, P_{<(t+1)}} \left[\nabla_\ell F(w_t)^\top G_\ell(w_t) \,\middle|\, Q_t \right].
\end{equation*}
By the law of total probability and \cref{lm:sphere}, since $P_t$ is independent of $w_t$, for each $\ell \in [L]$ we know that
\begin{align*}
    \bb E_{z_{<t}, X_{<t}, P_{<(t+1)}}\left[\nabla_\ell F(w_t)^\top G_\ell(w_t) \,\middle|\, Q_t \right] \bb P(Q_t) + \bb E_{z_{<t}, X_{<t}, P_{<(t+1)}}\left[\nabla_\ell F(w_t)^\top G_\ell(w_t) \,\middle|\, \bar Q_t \right] \bb P(\bar Q_t)\\
     = \bb E_{z_{<t}, X_{<t}, P_{<(t+1)}}\left[\nabla_\ell F(w_t)^\top G_\ell(w_t) \right] =
    \bb E_{z_{<t}, X_{<t}, P_{<t}}\left[\norm{\nabla_\ell F(w_t)}^2 \right],
\end{align*}

Rearranging terms, we thus obtain that
\begin{equation}
    \label{ineq:innprod}
    \begin{split}
        \bb E_{z_{<t}, X_{<t}, P_{<(t+1)}}\left[\nabla_\ell F(w_t)^\top G_\ell(w_t) \,\middle|\, Q_t \right]
        & =
        \frac{\bb E_{z_{<t}, X_{<t}, P_{<t}}\norm{\nabla_\ell F(w_t)}^2}{\bb P(Q_t)} - \\
        &\frac{\bb E_{z_{<t}, X_{<t}, P_{<(t+1)}} \left[\nabla_\ell F(w_t)^\top G_\ell(w_t) \,\middle|\, \bar Q_t \right] \bb P(\bar Q_t)}{\bb P(Q_t)}
    \end{split}
\end{equation}.

Using the definition of our event $\Bar{Q_t} = \bigcup_{i=1}^r\bigcup_{j=1}^n \bigcup_{\ell = 1}^L \bar{Q}^t_{\ell,i,j}$, we have that $\bar{Q}^t_{\ell,i,j} \subset \Bar{Q_t}$. Thus, we have that,
\begin{equation}
    \label{ineq:innproduppfull}
    \begin{split}
        \bb E_{z_{<t}, X_{<t}, P_{<(t+1)}} \left[\nabla_\ell F(w_t)^\top G_\ell(w_t) \middle | \Bar{Q_t}\right] &= \sum_{i=1}^r \bb E_{z_{<t}, X_{<t}, P_{<(t+1)}} \left[(\paren{p^t_{\ell, i}}^\top \nabla_\ell F(w_t))^2 \middle | \Bar{Q_t} \right] \\
    &\leq \frac{1}{n} \sum_{j=1}^n \sum_{i=1}^r \bb E_{z_{<t}, X_{<t}, P_{<(t+1)}} \left[(\paren{p^t_{\ell, i}}^\top \nabla_\ell f(w_t; \xi_j))^2 \middle | \Bar{Q_t} \right] \\
    &= \frac{1}{nr} \sum_{j=1}^n \sum_{i=1}^r \norm{\nabla_\ell f(w_t;\xi_j)}^2 \bb E_{z_{<t}, X_{<t}, P_{<(t+1)}} \left[\paren{V^t_{\ell,i,j}}^2 \middle | \Bar{Q_t} \right] \\
    &\leq \frac{1}{rn} \sum_{j=1}^n  \sum_{i=1}^r \norm{\nabla_\ell f(w_t;\xi_j)}^2 \bb E\left[\paren{V^t_{\ell,i,j}}^2 \middle | \bar{Q}^t_{\ell,i,j} \right] \\
    &= \frac{1}{rn} \sum_{j=1}^n \sum_{i=1}^r \norm{\nabla_\ell f(w_t;\xi_j)}^2\bb E\left[\paren{V^t_{\ell,i,j}}^2 \middle | \abs{V^t_{\ell,i,j}} \geq \frac{C}{\lip} \right] \\
    &= \frac{1}{n} \sum_{j=1}^n \norm{\nabla_\ell f(w_t;\xi_j)}^2 \left[1 + \frac{C}{\bb P\left[\abs{V} \geq \frac{C}{\lip}\right]\lip \sqrt{2\pi}}e^{-\frac{C^2}{2\lip^2}}\right].
    \end{split}
\end{equation}

The first equality uses the definition of $G_\ell$, the second inequality is Young's Inequality, the third equality uses the fact that $a^\top Z \sim \mathcal N(0, \norm{a}^2)$ for $Z \sim \mathcal N(0, I_d)$ and $a \in \mathbb R^d$ independent of $Z$. The fourth inequality uses Lemma~\ref{lem:condexptransfer} with X as $\paren{V^t_{\ell,i,j}}^2$. The fifth equality uses the fact that $\paren{V^t_{\ell,i,j}}^2 \geq \frac{C^2}{L\lip^2}$ is equivalent to $\abs{V^t_{\ell,i,j}} \geq \frac{C}{\lip}$. The sixth equality used Lemma~\ref{lem: truncgauss}. Combining~\ref{ineq:innproduppfull} with~\ref{ineq:innprod}, we obtain that
\begin{equation*}
    \begin{split}
        E_{z_{<t}, X_{<t}, P_{<(t+1)}} \left[\nabla_\ell F(w_t)^\top G_\ell(w_t) \,\middle|\, Q_t \right] &\geq \frac{\bb E_{z_{<t}, X_{<t}, P_{<t}}\norm{\nabla_\ell F(w_t)}^2}{2\,\bb P(Q_t)} \\
        - \frac{1}{n} \sum_{j=1}^n \frac{\norm{\nabla_\ell f(w_t;\xi_j)}^2\bb P(\bar Q_t)}{\bb P(Q_t)}&\left[1 + \frac{C}{\bb P\left[\abs{V} \geq \frac{C}{\lip}\right]\lip \sqrt{2\pi}}e^{-\frac{C^2}{2\lip^2}}\right].
    \end{split}
\end{equation*}

Thus, taking the sum over all $\ell \in [L]$, we get that
\begin{equation}
    \begin{split}
        \encircled{1} &\geq \sum_{\ell = 1}^L \frac{\bb E_{z_{<t}, X_{<t}, P_{<t}}\norm{\nabla_\ell F(w_t)}^2}{2\,\bb P(Q_t)} - \frac{1}{n} \sum_{j=1}^n  \sum_{\ell = 1}^L \frac{\norm{\nabla_\ell f(w_t;\xi_j)}^2\bb P(\bar Q_t)}{\bb P(Q_t)} \left[1 + \frac{C}{\bb P\left[\abs{V} \geq \frac{C}{\lip}\right]\lip \sqrt{2\pi}}e^{-\frac{C^2}{2\lip^2}}\right]\\
        &= \frac{\bb E_{z_{<t}, X_{<t}, P_{<t}}\norm{\nabla F(w_t)}^2}{2\,\bb P(Q_t)} - \frac{1}{n} \sum_{j=1}^n  \frac{\norm{\nabla f(w_t;\xi_j)}^2\bb P(\bar Q_t)}{\bb P(Q_t)}\left[1 + \frac{C}{\bb P\left[\abs{V} \geq \frac{C}{\lip}\right]\lip \sqrt{2\pi}}e^{-\frac{C^2}{2\lip^2}}\right]\\
        &\geq \frac{\bb E_{z_{<t}, X_{<t}, P_{<t}}\norm{\nabla F(w_t)}^2}{2\,\bb P(Q_t)}
        - \frac{\lip^2\bb P(\bar Q_t)}{\bb P(Q_t)}\left[1 + \frac{C}{\bb P\left[\abs{V} \geq \frac{C}{\lip}\right]\lip \sqrt{2\pi}}e^{-\frac{C^2}{2\lip^2}}\right]
    \end{split}
    \label{eq:inner-product}
\end{equation}

The second equality comes from the fact that the full gradient of the model is just concatenated version of the layer wise gradients. The third inequality follows from the per-sample Lipschitzness assumption~\ref{asp:lip}. 

For term $\encircled{2}$, 
using \cref{lem: posrand}, we have 
\begin{align*}
        \encircled{2}&\leq \sum_{\ell = 1}^L \frac{\bb E_{z_{<t}, X_{<(t+1)}, P_{<(t+1)}}\left[\norm{G_\ell(w_t; X_t)}^2\right]}{\bb P(Q_t)}
\end{align*}

\begin{equation*}
    \begin{split}  
        \bb E_{z_{<t}, X_{<(t+1)}, P_{<(t+1)}}\left[\norm{G_\ell(w_t; X_t)}^2\right] &\leq 2 \E_{z_{<t}, X_{<t}, P_{<(t+1)}} [\norm{G_\ell(w_t)}^2] + \\
        &2\E_{z_{<t}, X_{<(t+1)}, P_{<(t+1)}}[\norm{G_\ell(w_t;X_t) - G_\ell(w_t)}^2] \\
        &= 2\bb E_{z_{<t}, X_{<t}, P_{<(t+1)}}\left[\norm{\sum_{i=1}^r \paren{p^t_{\ell, i}}^T\nabla_\ell F(w_t) p^t_{\ell, i}}^2\right] + \\
         2\bb E_{z_{<t}, X_{<(t+1)}, P_{<(t+1)}}& \left[\norm{\sum_{i=1}^r \paren{p^t_{\ell, i}}^T(\nabla_\ell f(w_t; X_t) - \nabla_\ell F(w_t)) p^t_{\ell, i}}^2\right] \\
        &= \paren{\frac{d + r + 1}{r}} \bb E_{z_{<t}, X_{<t}, P_{<t}}[\norm{\nabla_\ell F(w_t)}^2] + \\
        & \paren{\frac{d + r + 1}{r}} \bb E_{z_{<t}, X_{<(t+1)}, P_{<t}}[\norm{\nabla_\ell f(w_t; X_t) - \nabla_\ell F(w_t)}^2] \\
        & \leq \frac{2d}{r} \bb E_{z_{<t}, X_{<t}, P_{<t}}[\norm{\nabla_\ell F(w_t)}^2] + \frac{2d \norm{\nabla_\ell F(w_t)}^2 {\ind}_{\{B < n\}}}{rB}.
    \end{split}
\end{equation*}

Thus, taking the sum and using the fact that $\sum_{\ell = 1}^L \norm{\nabla_\ell F(w_t)}^2 = \norm{\nabla F(w_t)}^2 \leq \lip^2$ and $r \leq d - 1$, we get that
\begin{equation}
    \encircled{2}\leq \frac{2d}{r\bb P(Q_t)} \bb E_{z_{<t}, X_{<t}, P_{<t}}[\norm{\nabla F(w_t)}^2] + \frac{8d\lip^2 {\ind}_{\{B < n\}}}{rB \bb P(Q_t)}
    \label{eq:grad-quad}
\end{equation}

For term $\encircled{3}$, $P^t_\ell$ is essentially a $|\c U_\ell| \times r$ matrix of independent $\c N\paren{0, \frac{1}{r}}$ entries. Since the identity matrix is positive semi-definite, using \cref{lem: posrand} and \cref{lm:sphere} we get that 
\begin{equation}
    \begin{split}
         \encircled{3} &= \sum_{\ell = 1}^L\bb E_{z_{<(t+1)}, X_{<t}, P_{<(t+1)}}\left[{z^t_\ell}^\top {P_\ell^t}^\top P_\ell^t z^t_\ell\middle| Q_t\right] \\
         & \leq
        \frac{\sum_{\ell = 1}^L\bb E_{z_{<(t+1)}, X_{<t}, P_{<(t+1)}}\left[{z^t_\ell}^\top {P_\ell^t}^\top P_\ell^tz^t_\ell\right]}{\bb P(Q_t)} \\
        &= \frac{\sigma^2 \sum_{\ell = 1}^L \bb E_{z_{<t}, X_{<t}, P_{<(t+1)}}\left[Tr({P_\ell^t}^\top P_\ell^t)\right]}{\bb P(Q_t)} \\
        &= \frac{\sigma^2 \sum_{\ell = 1}^L \bb E_{z_{<t}, X_{<t}, P_{<(t+1)}} \left[ \sum_{i=1}^r \norm{p^t_{\ell,i}}^2 \right]}{\bb P(Q_t)} \\
        &= \frac{\sigma^2 \sum_{\ell = 1}^L |\cal U_\ell|}{\bb P(Q_t)} = \frac{\sigma^2 d}{\bb P(Q_t)}.
    \end{split}
    \label{eq:noise-quad}
\end{equation}

Plugging \eqref{eq:inner-product}, \eqref{eq:grad-quad} and \eqref{eq:noise-quad} back into \eqref{eq:main-smooth}, we obtain that
\begin{align*}
    \bb E_{z_{<(t+1)}, X_{<(t+1)}, P_{<(t+1)}}[F(w_{t+1})|Q_t]
    & \leq
    \bb E_{z_{<t}, X_{<t}, P_{<t}}[F(w_t)|Q_t] \\
    &- \frac{\eta}{2}\left(1 - \frac{2d\smo\eta}{r}\right)\frac{\bb E_{z_{<t}, \xi_{<t}, U_{<t}}\norm{\nabla F(w_t)}^2}{\bb P(Q_t)} \\
    &+ \frac{\eta \lip^2\bb P(\bar Q_t)}{\bb P(Q_t)}\left[1 + \frac{C}{\bb P\left[\abs{V} \geq \frac{C}{\lip}\right]\lip \sqrt{2\pi}}e^{-\frac{C^2}{2\lip^2}}\right]\\
    &+ \frac{4d\eta^2 \lip^2 \smo \ind_{m < n}}{Br \bb P(Q_t)} + \frac{\eta^2 \smo \sigma^2 d}{2\bb P(Q_t)}
    \label{eq:onestep3}
\end{align*}

Assuming $\frac{2d\lambda \eta}{r} \leq 1$ and choosing $\eta \leq \frac{r}{4d\smo}$, we have that
\begin{align*}
    \bb E_{z_{<(t+1)}, X_{<(t+1)}, P_{<(t+1)}}\norm{\nabla F(w_t)}^2
    & \leq
    \frac{4 \bb E_{z_{<(t+1)}, X_{<(t+1)}, P_{<(t+1)}}[F(w_t) - F(w_{t+1}) | Q_t]\bb P(Q_t)}{\eta} \\
    &+ 2\smo\eta d\sigma^2 + \frac{16 d\eta \ind_{B < n} \lip^2 \smo}{Br} \\
    & + 4 \lip^2\left[1 + \frac{C}{\bb P\left[\abs{V} \geq \frac{C}{\lip}\right]\lip \sqrt{2\pi}}e^{-\frac{C^2}{2\lip^2}}\right]\;\bb P(\bar Q).
\end{align*}
Recall $Q_t$ is the event that clipping does not happen at iteration $t$ and $Q$ is the event that clipping does not happen for every iteration, hence $Q_t \cap Q = Q$. By the law of total probability and the assumption that $\abs{F(w_t) - F^*}\leq D$ for every $t$, we have that
\begin{align*}
    \bb E_{z_{<(t+1)}, X_{<(t+1)} P_{<(t+1)}}[F(w_t) - F(w_{t+1}) | Q_t]\bb P(Q_t)
    &=
    \bb E_{z_{<T}, X_{<T}, P_{<T}}[F(w_t) - F(w_{t+1}) | Q_t]\bb P(Q_t) \\
    =
    \bb E_{z_{<T}, X_{<T}, P_{<T}}&\Big[F(w_t) - F(w_{t+1}) \Big| Q_t \cap Q\Big]\bb P(Q_t \cap Q)  \\
    +
    \bb E_{z_{<T}, X_{<T}, P_{<T}}&\Big[F(w_t) - F(w_{t+1}) \Big| Q_t \cap \bar Q\Big]\bb P(Q_t \cap \bar Q) \\
    \leq
    \bb E_{z_{<T}, X_{<T}, P_{<T}}&[F(w_t) - F(w_{t+1}) | Q]\bb P(Q) + 2D \; \bb P(\bar Q).
\end{align*}
As a result, we have that
\begin{align*}
    \bb E_{z_{<(t+1)}, X_{<(t+1)}, P_{<(t+1)}}\norm{\nabla F(w_t)}^2
    & \leq
    \frac{4 \bb E_{z_{<T}, X_{<T}, P_{<T}}[F(w_t) - F(w_{t+1}) | Q]\bb P(Q)}{\eta}\\
    &+ 2\smo\eta d\sigma^2 + \frac{16 \eta d \ind_{B < n} \lip^2 \smo}{Br} \\
    & + \left(4\lip^2\left[1 + \frac{C}{\bb P\left[\abs{V} \geq \frac{C}{\lip}\right]\lip \sqrt{2\pi}}e^{-\frac{C^2}{2\lip^2}}\right] + \frac{8D}{\eta}\right)\;\bb P(\bar Q).
\end{align*}

Taking expectation with respect to all randomness, i.e., $\bb E=\bb E_{z_{<T}, \xi_{<T}, u_{<T}}$, summing up from $t=0$ to $T-1$, and dividing both sides by $T$, we have that
\begin{align*}
    \bb E\norm{\nabla F(w_\tau)}^2
    & =
    \frac{1}{T}\sum_{t=1}^{T} \bb E_{z_{<t}, \xi_{<t}, u_{<t}}\norm{\nabla F(w_t)}^2 \\
    & \leq
    \frac{4\,\bb E[F(w_0) - F(w_T) | Q] \bb P(Q)}{\eta T} + \frac{8\smo \eta T\, d\log(1/\delta)}{n^2\eps^2} C^2 + \frac{16 d\eta \ind_{B < n} \lip^2 \smo}{Br} \\
    & + \left(4 \lip^2\left[1 + \frac{C}{\bb P\left[\abs{V} \geq \frac{C}{\lip}\right]\lip \sqrt{2\pi}}e^{-\frac{C^2}{2\lip^2}}\right] + \frac{8D}{\eta}\right)\;\bb P(\bar Q) \\
    & \leq
    \frac{4\,\bb E[F(w_0) - F(w_T) | Q] \bb P(Q)}{\eta T} + \frac{8\smo \eta T\, d\log(1/\delta)}{n^2\eps^2} C^2 + \frac{16 d\eta \ind_{B < n} \lip^2 \smo}{Br} \\
    &+ \left(4 \lip^2\left[\bb P\left[\abs{V} \geq \frac{C}{\lip} \right] + \frac{C}{\lip\sqrt{2\pi}}e^{-\frac{C^2}{2\lip^2}}\right] + \frac{8D}{\eta}\bb P\left[\abs{V} \geq \frac{C}{\lip} \right]\right)TLnr \\
    & \leq
    \frac{4\,\bb E[F(w_0) - F(w_T) | Q] \bb P(Q)}{\eta T} + \frac{8\smo \eta T\, d\log(1/\delta)}{n^2\eps^2}C^2 \\
    & \quad + \frac{16 d\eta \ind_{B < n} \lip^2 \smo}{Br} + \lip^2 \left(4\left[2 + \frac{C}{\lip \sqrt{2\pi}}\right] + \frac{16D}{\lip^2 \eta}\right)TLnr e^{-\frac{C^2}{2\lip^2}}
\end{align*}
Considering the choice of parameters to be
\begin{align*}
    &\eta T = \frac{n \eps}{\smo \sqrt{2d\log(1/\delta)}}, \eta = \frac{r}{4d\smo}, T = \frac{2\sqrt{2d} n \eps}{r\sqrt{\log(1/\delta)}}, \\
    &C = \lip\sqrt{2\log\paren{\left(10 + \frac{64dD\smo}{r\lip^2}\right) \frac{n^3 \eps^2}{2\log(1/\delta)}}}, \\
    &B \geq \max\paren{\sqrt{\frac{rn}{2}}\paren{\frac{\log(1/\delta)}{2d}}^{1/4}, \frac{n \eps}{2\sqrt{2d\log(1/\delta)}}},
\end{align*}
we get that,
\begin{align*}
    \bb E\norm{\nabla F(w_\tau)}^2 \leq \left(\smo D + \paren{2 + \mathcal M + 2\log\paren{\left(10 + \frac{64dD\smo}{r\lip^2}\right) \frac{L n^3 \eps^2}{2\log(1/\delta)}}}\lip^2\right)\frac{4\sqrt{2d\log(1/\delta)}}{n\eps},
\end{align*}

where $\mathcal M = \frac{\log\paren{4\left[2 + \sqrt{\log\paren{\left(10 + \frac{64dD\smo}{r\lip^2}\right) \frac{n^3 \eps^2}{2\log(1/\delta)}}}\right] + \frac{64dD\smo}{r\lip^2}}}{\log{\left(10 + \frac{64dD\smo}{r\lip^2}\right)}} = \tilde{O}(1)$.
\end{proof}

\end{document}